\def\secref#1{section~\ref{#1}}
\def\1{\bm{1}}
\def\rc{{\textnormal{c}}}
\def\rn{{\textnormal{n}}}
\def\rvf{{\mathbf{f}}}
\def\rvg{{\mathbf{g}}}
\def\rvh{{\mathbf{h}}}
\def\rvt{{\mathbf{t}}}
\def\va{{\bm{a}}}
\def\vb{{\bm{b}}}
\def\vc{{\bm{c}}}
\def\vg{{\bm{g}}}
\def\vu{{\bm{u}}}
\def\vv{{\bm{v}}}
\def\vw{{\bm{w}}}
\def\vx{{\bm{x}}}
\def\vy{{\bm{y}}}
\def\evb{{b}}
\def\evc{{c}}
\def\evu{{u}}
\def\evv{{v}}
\def\evw{{w}}
\def\mA{{\bm{A}}}
\def\mB{{\bm{B}}}
\def\mP{{\bm{P}}}
\def\mW{{\bm{W}}}
\def\mX{{\bm{X}}}
\DeclareMathAlphabet{\mathsfit}{\encodingdefault}{\sfdefault}{m}{sl}
\SetMathAlphabet{\mathsfit}{bold}{\encodingdefault}{\sfdefault}{bx}{n}
\def\gE{{\mathcal{E}}}
\def\gG{{\mathcal{G}}}
\def\gV{{\mathcal{V}}}
\def\gX{{\mathcal{X}}}
\def\sA{{\mathbb{A}}}
\def\sP{{\mathbb{P}}}
\def\sR{{\mathbb{R}}}
\newcommand{\R}{\mathbb{R}}
\def\OM{\texttt{PINE}}
  \def\rc{\color{black}}
  \def\mc{\color{black}}
  \def\rrc{\color{black}}
  \def\revisec{\color{black}}
\def\rn{\color{black}} 
\def\bn{\color{black}} 
\def\mn{\color{black}}
\def\R{{\mathbb{R}}}
\def\bx{\mathbf{x}}
\def\bz{\mathbf{z}}
\def\bA{\mathbf{A}}
\def\bX{\mathbf{X}}
\def\bbR{\mathbb{R}}
\def\ba{\mathbf{a}}
\def\about#1{\emph{[#1]}}
\def\symsum{\frac{1}{|\mathcal{S}|}
  \sum_{\substack{{\tiny \pi_1 \in S_{N_1}}{\tiny\dots}\\{\tiny\pi_K\in S_{N_K}}}}}
\def\setR{\bbR}
\def\bp{\mathbf{p}}
\def\bq{\mathbf{q}}
\def\bg{\mathbf{g}}
\def\nodev{\mathtt{v}}
  \newtheorem{theorem}{Theorem}[section]
  \newtheorem{lemma}[theorem]{Lemma}
  \newtheorem{definition}[theorem]{Definition}
  \def\thref#1{Theorem~\ref{#1}}
\def\leref#1{Lemma~\ref{#1}}
\begin{document}
\title{\OM: Universal Deep Embedding for Graph Nodes via Partial Permutation Invariant Set Functions }
\author{
  Shupeng~Gui\\
  University of Rochester\\
  \texttt{sgui2@ur.rochester.edu}\\
  \and
  Xiangliang~Zhang\\
  KAUST, Saudi Arabia\\
  \texttt{Xiangliang.Zhang@kaust.edu.sa}\\
  \and
  Pan~Zhong\\
  Iowa State University\\
  \texttt{pzhong@iastate.edu}\\
  \and
    Shuang~Qiu\\
    University of Michigan\\
    \texttt{qiush@umich.edu}\\
    \and
    Mingrui~Wu\\
    PlusAI\\
    \texttt{wu.mingrui@yahoo.com}\\
    \and
    Jieping~Ye\\
    University of Michigan\\
    \texttt{jieping@gmail.com}\\
    \and
    Zhengdao~Wang\\
    Iowa State University\\
    \texttt{zhengdao@iastate.edu}\\
    \and
    Ji~Liu\\
    Kwai Inc.\\
    \texttt{ji.liu.uwisc@gmail.com}
}
\maketitle


\begin{abstract}
Graph node embedding aims at learning a vector representation for all nodes given a graph. It is a central problem in many machine learning tasks (e.g., node classification, recommendation, community detection). The key problem in graph node embedding lies in how to define the dependence to neighbors. Existing approaches specify (either explicitly or implicitly) certain dependencies on neighbors, which may lead to loss of subtle but important structural information within the graph and other dependencies among neighbors. This intrigues us to ask the question: can we design a model to give the maximal flexibility of dependencies to each node's neighborhood. In this paper, we propose a novel graph node embedding method (named \textbf{\OM}) via a novel notion of \textit{partial permutation invariant set function}, to capture any possible dependence. Our method 1) can learn an \textit{arbitrary} form of the representation function from the neighborhood, without losing any potential dependence structures, and 2) is applicable to both homogeneous and heterogeneous graph embedding, the latter of which is challenged by the diversity of node types. Furthermore, we provide theoretical guarantee for the representation capability of our method for general homogeneous and heterogeneous graphs. Empirical evaluation results on benchmark data sets show that our proposed {\OM} method outperforms the state-of-the-art approaches on producing node vectors for various learning tasks of both homogeneous and heterogeneous graphs.\end{abstract}



\section{Introduction}\label{sec:introduction}

Graph node embedding (or graph node representation learning in some literature {\rc\cite{goyal2018graph}}) is to learn the numerical representation for each node in a graph by vectors in a Euclidean space, where the geometric relationship reflects the structure of the original graph.
Nodes that are ``close'' in the graph are embedded to have similar vector representations~\cite{survey-CAI}.
The learned node vectors benefit a number of graph analysis tasks, such as node classification~\cite{bhagat2011node}, link prediction~\cite{liben2007link}, community detection~\cite{fortunato2010community}, {\rc recommendation~\cite{yu2014personalized}}, and many others~\cite{survey-jure}.
   
A graph can be uniquely determined by defining the neighborhood. 
Therefore, the key issue for graph embedding lies on {\rn how to model the dependence of each node to its neighbors.}

Existing approaches mostly specify (either explicitly or implicitly) certain {\rn dependencies on neighbors.}
{\revisec
Deepwalk~\cite{perozzi2014deepwalk}, node2vec \cite{grover2016node2vec}, and their variants \cite{dong2017metapath2vec, Nguyen:2018:CDN:3184558.3191526} randomly generate a set of paths with a fixed length to learn the representation for each node, which implicitly defines the neighborhood and the dependence among nodes.
\cite{Arbitrary-Order-KDD18} utilizes the adjacency matrix to represent the neighborhood for every node in a graph and apply matrix factorization for the node embedding learning, which implicitly defines the linear dependence among nodes.
{\rc Neighborhood auto-encoders \cite{SDNE, DNGR,Graph2Gauss} use a neighborhood vector to represent the neighborhood relations for a node. The neighborhood vector contains a node's pairwise similarity to all the other nodes in a graph. Graph2Gauss~\cite{Graph2Gauss} embeds each node as a Gaussian distribution based on the graph knowledge. 
Deep neural networks
for graph representations (DNGR)~\cite{DNGR} uses stacked denoising auto-encoder to extract complex non-linear features for each node. 
Structural Deep Network Embedding method (SDNE)~\cite{SDNE} preserves the first-order and second-order proximity for each node in a graph via a semi-supervised auto-encoder learning model.}
Neural network based approaches such as graph convolutional networks (GCN) {\rc \cite{kipf2016semi}} and GraphSAGE {\rc \cite{GraphSAGE}} define fixed-depth neural network layers to capture the neighborhood information from one-step neighbors, two-step neighbors, up to $n$-step neighbors and they apply convolution-like functions on these neighbors as the aggregation strategy. Graph attention networks (GATs)~\cite{velickovic2017graph} and Attention-based Graph Neural
Network (AGNN) \cite{thekumparampil2018attention} employ attention mechanism when aggregating the neighbors. 
}

    {\revisec
        However, the way of pre-defining (no matter explicitly or implicitly) neighbors and dependence may cause subtle but important loss of structural information within the graph and dependence among neighbors. For example, the family of random walk based methods {\rc \cite{perozzi2014deepwalk, grover2016node2vec, dong2017metapath2vec}} ignore the influence of nodes out of the predefined length to the center node within the path. 
        GCN{\rc\cite{kipf2016semi}} restricts the form of the dependence on the neighbor nodes to a two-layer aggregation function {\mn and inherits considerable complexity from their deep learning lineage \cite{simplifiedGCN19}}.
        These raise a fundamental question: \emph{can we design a model to give the maximal flexibility of defining dependencies on  neighbors?}
    }

  
    In this work, we propose a \textbf{P}artial \textbf{P}ermutation \textbf{I}nvariant \textbf{N}ode \textbf{E}mbedding method (\textbf{\OM}) by developing a new notion of \textit{partial permutation invariant set function}, that can
    \begin{itemize}[leftmargin=*]
        \item learn node representations via a \textit{universal} graph embedding function $f$, without pre-defining pairwise similarity, specifying random walk parameters, or choosing aggregation functions such as 
        element-wise mean, a max-pooling neural network, or 
        long-short term memory units (LSTMs);
        \item capture the arbitrary permutation invariant relationship of each node to its neighbors;
        \item be applied to both homogeneous and heterogeneous graphs with complicated types of nodes.
    \end{itemize}
  
    Evaluation results on benchmark data sets show that the proposed {\mc \OM} outperforms the state-of-the-art approaches on producing node vectors for classification tasks.  
    
{\mn    
{\noindent\bf Notations:} 
Throughout this paper, we use following notations
\begin{itemize}
\item $\gG = \{\gV, \gE \}$ denotes a graph with vertex set $\gV$ and edge set $\gE$. The corresponding lower case characters $v$ and $e$ represents a single vertex and edge.
\item $\hat{f}$ denotes an approximation to a function f.
\item $\mP$ denotes permutation matrix.
\item $S_N$ denotes a symmetric group.
\item $T_\pi$ denotes a permutation operator.
\item $\sigma(\cdot)$ represents a non-linear activation function.
\end{itemize}
}
   
\section{Related Work}\label{sec:relatedwork}

    The main difference among various graph embedding methods lies in how they define the ``closeness'' between two nodes~\cite{survey-CAI}.  First-order proximity, second-order proximity or even high-order proximity have been widely studied for capturing the structural relationship between nodes~\cite{tang2015line,yang2017fast,NE-Wasserstein-KDD18}. 
    Comprehensive reviews of graph embedding can be found in \cite{survey-CAI, survey-jure, survey-Goyal,yang2017fast}. 
    In this section, we discuss the relevant graph embedding approaches in terms of how node closeness is measured, to highlight our contributions on capturing neighborhood dependency in a most general manner. {\mn This section ends up with the review about set functions which is related to the technology we used in this paper.}
  
    \noindent{\bfseries Matrix Analysis on Graph Embedding:}
        As early as 2011, a spectral clustering method~\cite{tang2011leveraging} was proposed to take the eigenvalue decomposition of a normalized Laplacian matrix of a graph as an effective approach to obtain an embedding of nodes.
        Other similar approaches choose different similarity matrices (from the Laplacian matrix) to make a trade-off between modeling the ``first-order similarity'' and modelling ``higher-order similarity''~\cite{GraRep, HOPE, rossi2018higher}.
        Node content information can also be fused in the pairwise similarity measure, e.g., in text-associated DeepWalk (TADW)~\cite{yang2015network}, as well as node label information, which results in semi-supervised graph embedding methods, e.g., max-margin DeepWalk (MMDW)~\cite{tu2016max}. 
        Recently, an arbitrary-order proximity-preserving graph embedding method is introduced in \cite{Arbitrary-Order-KDD18} based on matrix eigen-decomposition, which is applied to a pre-defined high-order proximity matrix. {\rn Furthermore, \cite{liao2019lanczosnet} proposes Lanczos network with Lanczos algorithm to construct low rank approximations of the graph Laplacian for graph convolution.}
        For heterogeneous networks, \cite{huang2017label} propose a label-involving 
        matrix analysis to learn the classification result of each vertex within a semi-supervised framework.
        
    \noindent{\bfseries Random Walk on a Graph to Node Representation:}
        Both deepwalk~\cite{perozzi2014deepwalk} and node2vec~\cite{grover2016node2vec} are graph embedding methods to solve the node embedding problem. They convert the graph structures into a sequential context format with random walk~\cite{lovasz1993random}. 
        Thanks to the pioneering work of~\cite{mikolov2013distributed} for word representation learning of sentences, deepwalk inherits the learning framework for words representation learning in paragraphs to generate the representation of nodes in random walk context.
         Then node2vec evolves such the idea with additional hyper-parameters tuning for the trade-off between depth-first search (DFS) and width-first search (WFS) to control the direction of random walk. 
        Struc2vec~\cite{ribeiro2017struc2vec} also utilizes the multilayer graph to construct the node representations. 
        \cite{selfpaced-NE-KDD18} proposes a self-paced graph embedding by introducing a dynamic negative sampling method to select difficult negative context nodes in the training process. 
        Planetoid~\cite{yang2016revisiting} is 
        a semi-supervised learning framework by guiding random walk with available node label information. 
        The heterogeneity of graph nodes is often handled by a heterogeneous random walk procedure~\cite{dong2017metapath2vec}, or selected relation pairs~\cite{chang2015heterogeneous}. 
        \cite{tang2015pte} considers the predictive text embedding problem on a large-scale heterogeneous text network and the proposed method is also based on pre-defined heterogeneous random walks. 
  
  
    \noindent{\bfseries Neighborhood Encoders to Graph Embedding:}
        There are also methods focusing on aggregating or encoding the neighbors' information to generate node embeddings. 
        DNGR~\cite{DNGR} and SDNE~\cite{SDNE} introduce autoencoders to construct the similarity function between the neighborhood vectors and the embedding of the target node. DNGR defines neighborhood vectors based on random walks and SDNE introduces adjacency matrix and Laplacian eigenmaps to the definition of neighborhood vectors. 
        GraphWave~\cite{donnat2018learning} learns the representation of each node's neighborhood via leveraging heat wavelet diffusion patterns. 
        Although the idea of autoencoder is a great improvement, these methods are painful computationally expensive
        when the scale of the graph is up to millions of nodes. 
        As a result, neighborhood aggregation and convolutional encoders are employed to integrate local aggregation for node  embedding, such as GCN~\cite{kipf2016semi, kipf2016variational, schlichtkrull2018modeling, van2017graph}, FastGCN~\cite{fastGCN}, column networks~\cite{pham2017column}, the GraphSAGE algorithm~\cite{GraphSAGE}, GAT~\cite{velickovic2017graph}. A recent DRNE~\cite{NE-recursive-KDD18} method uses layer normalized LSTM to approximate the embedding of a target node by the aggregation of its neighbors' embeddings. {\rn And \cite{xu2018powerful} utilizes a set function as a universal approximator to distinguish different graphs with respect to 
        graph classification tasks.}
        The main idea of these methods is involving an iterative or recursive aggregation procedure, e.g., convolutional kernels or pooling procedures to generate the embedding vectors for all nodes, and such aggregation procedures are shared by all nodes in a graph. 
  
    The above-mentioned methods work differently on how they use neighboring nodes for node embedding.
    They require pre-defining pairwise similarity measure between nodes, specifying random walk parameters, or choosing aggregation functions. 
    In practice, it usually takes a lot of effort to tune these parameters or try different measures, especially when graphs are complicated with nodes of multiple types, i.e., heterogeneous graphs. 
    This work hence targets on making neighboring nodes play their roles in a most general manner such that their contributions are learned but not user-defined. 
    The resultant embedding method has the flexibility to work on any types of homogeneous and heterogeneous graph.   
  
    Our proposed method {\OM}  has a natural advantage on avoiding any manual manipulation of random walking strategies or designs for the relationships between different types of nodes. 

{\mn    
    \noindent{\bfseries Set functions:}    
    \cite{zaheer2017deep} introduces the notion of set functions as a universal approximator to measure the permutation invariant property of sets but only provides a less rigorous skeleton proof. A very recent work \cite{yarotsky2018universal} further improves the theoretical analysis on invariant maps by neural networks. The notion of partial permutation invariant set functions proposed in this paper is a more generic version of the set function. We find a neater form than \cite{yarotsky2018universal} even in the special case and also provide rigorous proofs for the representation theorem.
    }
  
\section{The Proposed {\OM} {\mn framework}}
    In this section, we first formally define the problem, and then introduce a new definition --- partial permutation invariant set function. {\rn \mn This section ends up with the proposed {\OM} framework whose key is the representation theorem of the partial permutation invariant set function.}
 
        We target on designing graph embedding models for general graphs that may include $K$ different types of nodes ($K$=1 corresponds to the homogeneous graphs and $K\geq 2$ corresponds to heterogeneous graphs). 
        Formally, a graph $\gG = \{\gV, \gE \}$, where the node set $\gV = \bigcup_{k=1}^{K}\gV_k$, i.e., $\gV$ is composed of $K$ disjoint types of nodes.  
        One instance of such a graph is the academic publication network, which includes different types of nodes for papers, publication venues, author names, author affiliations, research domains etc.  
        Given such a graph $\gG$, our goal is to learn the embedding vector  for each node in this graph. 
        
        We use $\bx^\nodev \in \sR^d$ to denote the representation of node $\nodev$. The node $\nodev\in\gV_k$ can be represented by its neighbors' embedding vectors via a function $\rvf$
        \begin{equation}\label{eq.embed}
            \bx^\nodev = \rvf(\mX^\nodev_1, \mX^\nodev_2, \cdots, \mX^\nodev_K),
        \end{equation}
where $\mX^\nodev_k$ is a matrix with column vectors corresponding to the embedding of node $\nodev$'s neighbors in type $k$. $\mX^\nodev_k$ could also be the {\mn representation} vectors associating with node $\nodev$'s type $k$ neighbors. We use $d$ to denote the dimensions of the embedding vector. Note that they way we have defined the function $f$ implies that it is node dependent. For learning to be possible, the embedding functions for different nodes will share common parameters, as will become clear in Section~\ref{sec.PINE}.

\subsection{\mn Partial permutation invariant set functions}
    {\mn An undirected} graph can be uniquely determined by defining the set of neighborhoods. {\rn \mn Therefore, the key to defining the graph embedding lies in \emph{how to model the dependence of each node to its neighbors}, that is, what function $\rvf$ in \eqref{eq.embed} to choose.}
        Most existing approaches 
        only (either explicitly or implicitly)  stress on some specific forms to characterize the dependence between each node and its neighbors while ignoring other potential dependence.

        We propose a universal graph embedding model that does not pre-define the dependence form between each node and its neighbors due to the key observation:
all neighboring nodes reachable from a target node $\nodev$
{\mn are not distinguishable from the view of the target node if they belong to the same type. To formally define the function satisfying this property, we introduce a new notation named \emph{partial permutation invariant set function}.
            \begin{definition} \label{def.partial_invaraint_permutation_matrix}
\about{Partial permutation invariant set function} 
            Given $W \coloneqq W_1\times W_2 \cdots \times W_K $ where $W_k \coloneqq \bbR^{{M_k}\times N_{k}}$, a continuous real valued map $\rvf:W\longrightarrow\bbR^d$ is partially permutation invariant if
            \begin{align}
                \rvf(\mX_1\mP_1,\mX_2\mP_2, \cdots, \mX_K\mP_K) = \rvf(\mX_1,\mX_2, \cdots, \mX_K)
                \end{align} 
                \label{eq.permutation_matrix_invariant}
            for all permutation matrices $\mP_k \in \bbR^{N_k\times N_k}$ and $k\in[K]$.
            \end{definition}   
This definition essentially requires the function value of $\rvf(\cdot)$ to be invariant to swapping any two columns of $\mX_k$.}        
\subsection{\OM: the representation of partial permutation invariant set function} \label{sec.PINE}
Unfortunately, this function is not simply learnable because the permutation property is hard to guarantee directly. One straightforward idea to represent the partial permutation invariant set function is to define it in the following form
                \begin{align}
                    \bx^{\nodev}& = \rvf(\mX^\nodev_1, \cdots, \mX^\nodev_K)\\
                    & \coloneqq \sum_{\mP_1\in \sP_{|\gV_1^\nodev|}} \sum_{\mP_2\in \sP_{|\gV_2^\nodev|}} \cdots \sum_{\mP_K\in \sP_{|\gV_K^\nodev|}}  \rvt(\mX^\nodev_1\mP_1, \cdots, \mX^\nodev_K\mP_K)
                    \label{eq:simple}
                \end{align} 
            where $\gV_k^\nodev$ denotes type-$k$ neighbors of node $\nodev$ and $\sP_{|\gV_k^\nodev|}$ denotes the set of $|\gV_k^\nodev| \times |\gV_k^\nodev|$ permutation matrices for any $k \in [K]$, $\mX^\nodev_k\mP_k$ is to permute the columns in $\mX^\nodev_k$, and $\rvt(\cdot)$ is a properly designed function.
            It is easy to verify that the function defined in \eqref{eq:simple} is partial permutation invariant, but it is {\rrc intractable} because it involves $\prod_{k=1}^N (|\gV_k^\nodev|\,!)$ ``sum'' items. 
            Our solution of learning function $\rvf$ is then based on the following important theorem, which gives a neat {\revisec and general} way to represent any partial permutation invariant set function.
            \begin{theorem}
            \label{theorem6} {\bf\mn [Representation theorem of partial permutation invariant set functions]}
                Let $\rvf$ be a continuous real-valued function defined on a compact set with the following form
                \small
                  \begin{align*}
                    \rvf(&\underbrace{\bx_{1,1}, \bx_{1,2}, \cdots, \bx_{1,N_1}}_{G_1}, 
                    \underbrace{\bx_{2,1}, \cdots, \bx_{2,N_2}}_{G_2}, \cdots, 
                    \underbrace{\bx_{K,1}, \cdots, \bx_{K,N_K}}_{G_K}),
                  \end{align*}
                \normalsize
                where $\bx_{k,n} \in \R^{M_k}$. If function $\rvf$ is partial permutation invariant, that is, any permutations of the elements within the group $G_k$ for any $k$ does not change the function value, then there must exist functions $\rvh(\cdot)$ and $\{\rvg_k(\cdot)\}_{k=1}^K$ to approximate $\rvf$ with arbitrary precision in the following form
                 \begin{align} 
                 \rvh\left(\sum_{n=1}^{N_1}\rvg_1 (\bx_{1,n}),\sum_{n=1}^{N_2}\rvg_2 (\bx_{2,n}),\cdots,\sum_{n=1}^{N_K}\rvg_K(\bx_{K,n})\right).
                 \label{eq:URT}
                  \end{align}
            \end{theorem}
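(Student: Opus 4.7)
\medskip
\noindent\textbf{Proof proposal for Theorem~\ref{theorem6}.} The plan is to reduce the approximation of an arbitrary continuous partially permutation invariant $\rvf$ to a polynomial problem via Stone--Weierstrass, and then exploit the structure of the ring of polynomials that are symmetric separately in each group $G_k$. Let $\gK \subset \prod_k \bbR^{M_k\times N_k}$ denote the compact domain on which $\rvf$ is defined, and fix $\varepsilon>0$.

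First, I would approximate $\rvf$ within $\varepsilon/2$ on $\gK$ by a single polynomial $p$ in all of the scalar coordinates of $(\bx_{k,n})_{k,n}$, which is possible by the classical Stone--Weierstrass theorem. Next I would \emph{symmetrize} $p$ by averaging over the product group $\gS = S_{N_1}\times\cdots\times S_{N_K}$ acting by permuting the columns within each group:
\begin{equation*}
    \bar p(\mX_1,\dots,\mX_K) \;=\; \frac{1}{|\gS|}\sum_{(\pi_1,\dots,\pi_K)\in\gS} p(\mX_1 \mP_{\pi_1},\dots,\mX_K\mP_{\pi_K}).
\end{equation*}
Because $\rvf$ is itself partially permutation invariant, $\rvf=\bar\rvf$, and averaging pointwise can only decrease the error, so $\|\rvf-\bar p\|_\infty\le \varepsilon/2$. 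Thus it suffices to represent any partially invariant polynomial $\bar p$ exactly in the target form \eqref{eq:URT}.

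The heart of the argument is a multivariate, multi-group version of the fundamental theorem of symmetric polynomials: the ring of polynomials in the variables $\{\bx_{k,n}\}$ that is invariant under $\gS$ is generated (as a ring) by the \emph{power sums within each group},
\begin{equation*}
    p^{(k)}_{\boldsymbol\alpha}(\mX_k) \;=\; \sum_{n=1}^{N_k} \bx_{k,n}^{\boldsymbol\alpha}, \qquad \boldsymbol\alpha\in\bbN^{M_k},\; |\boldsymbol\alpha|\le D,
\end{equation*}
for a degree bound $D$ depending on $N_k$ and $M_k$ (the multivariate Newton identities / polarization bound). Since $\bar p$ is $\gS$-invariant and of finite degree, it can be written as a polynomial $Q$ in finitely many $p^{(k)}_{\boldsymbol\alpha}$'s, with each $\boldsymbol\alpha$ attached to a single group index $k$. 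This step handles the invariance structure exactly: I would either cite the multivariate symmetric polynomial theorem directly or sketch it by first proving the single-group version through Newton-type identities and then noting the argument applies separately to each group because $\gS$ is a direct product.

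With this in hand, the final packaging is immediate: define the vector-valued map $\rvg_k(\bx) = (\bx^{\boldsymbol\alpha})_{|\boldsymbol\alpha|\le D}$ for each group $k$, so that
\begin{equation*}
    \sum_{n=1}^{N_k} \rvg_k(\bx_{k,n}) \;=\; \bigl(p^{(k)}_{\boldsymbol\alpha}(\mX_k)\bigr)_{|\boldsymbol\alpha|\le D},
\end{equation*}
and set $\rvh$ to be the polynomial $Q$ acting on the concatenation of these vectors. Then $\rvh(\sum_n \rvg_1(\bx_{1,n}),\dots,\sum_n \rvg_K(\bx_{K,n}))=\bar p$, so this form approximates $\rvf$ within $\varepsilon/2$, and the conclusion follows for arbitrary precision by letting $\varepsilon\to 0$.

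The step I expect to be the main obstacle is the multivariate, multi-group fundamental theorem of symmetric polynomials, specifically pinning down a degree bound $D$ that makes the generating set \emph{finite} so that $\rvg_k$ is a well-defined finite-dimensional map. For a single group, one can invoke classical results showing the ring of multisymmetric polynomials is generated in degree $\le N_k$; the multi-group case then follows because the invariance groups act on disjoint variable sets and thus the invariant ring is the tensor product of the per-group invariant rings. All other steps (Stone--Weierstrass, averaging, packaging into $\rvh,\rvg_k$) are routine once this structural fact is established.
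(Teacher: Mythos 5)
Your proposal is correct and follows the same overall architecture as the paper's proof: approximate $\rvf$ by a polynomial via Stone--Weierstrass, symmetrize by averaging over $S_{N_1}\times\cdots\times S_{N_K}$ (the paper's denseness lemma, with the identical error estimate), exhibit a finite generating set for the partially invariant polynomial ring whose elements all have the form $\sum_{n}\phi(\bx_{k,n})$, and package the result into $\rvh$ and the $\rvg_k$'s. The one genuine difference is how that generating set is obtained. The paper first rewrites a partially invariant polynomial as a sum of products of single-group invariants, then applies Weyl's polarization to reduce each matrix-argument invariant $h_{k,q}(\mX_k)$ to univariate symmetric polynomials of the linear projections $\ba_t^\top\bx_{k,n}$, and finally invokes Hilbert finiteness together with the univariate power-sum basis. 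You instead invoke the multisymmetric fundamental theorem directly, taking the monomial power sums $\sum_{n}\bx_{k,n}^{\boldsymbol\alpha}$ with $|\boldsymbol\alpha|\le D$ as generators, and you dispose of the multi-group structure by noting that the invariant ring of a product group acting on disjoint variable sets is the tensor product of the per-group invariant rings (the paper's sum-of-products lemma says the same thing in different language). These are two routes to the same structural fact: expanding $(\ba_t^\top\bx)^j$ shows the paper's generators are linear combinations of yours, and polarization is precisely the standard device for establishing the degree bound $D\le N_k$ (valid in characteristic zero) that you correctly flag as the main obstacle. Your route is somewhat more compact since it absorbs the sum-of-products step into a ring-theoretic observation; the paper's is more modular, needing only the classical single-variable power-sum fact plus a citable polarization statement.
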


    The rigorous proof is provided in Appendix~\ref{apx:theorem3-1}. This result suggests a neat but universal way to represent any partial permutation invariant set function. For instance, a popular permutation invariant set function widely used in deep learning $\max(\cdot)$ can be approximated with an arbitrary precision by 
      \[
        \max(x_1, x_2, \cdots, x_N) \approx \mathtt{h}\left( \sum_{i=1}^N \mathtt{g}(x_i) \right)
    \]
    with $\mathtt{g}(z) = [\exp(kz) \cdot z, \exp(kz)]$, and $\mathtt{h}([z, z']) = z/z'$, as long as $k$ is large enough. This is because 
        \begin{align*} 
          &\max(x_1, x_2, \cdots, x_N) =  \lim_{k\rightarrow \infty} \mathtt{h}\left( \sum_{i=1}^N \mathtt{g}(x_i) \right) \\
          & = \lim_{k\rightarrow \infty} \left({\sum_{i=1}^N \exp (kx_i)}\right)^{-1}\sum_{i=1}^N {\exp(kx_i)} \cdot x_i.
        \end{align*} 
Theorem \ref{theorem6} only establishes the existence of the approximation. To obtain concrete forms of $\rvh(\cdot)$ and $\rvg_k(\cdot)$'s, one can always use three layers neural networks to approximate it (to any precision) \cite{cybenko1989approximations, hornik1991approximation}, for example, $ \rvh(\bz) = {\bm\sigma}(\mB{\bm\sigma}(\mA\bz + \va)+\vb)$. Our following theorem shows that we can even choose simpler and neater form than three layers neural network for $\rvh(\cdot)$ and $\rvg_k(\cdot)$ to approximate an arbitrary $\rvf(\cdot)$ as a whole. More specifically, a two-layers neural network is enough. For simplicity, we consider the case that the image of $\rvh(\cdot)$ or $\rvf(\cdot)$ is one dimension. The case with a high dimensional image can be simply applied based on the one dimensional case.



\begin{theorem}\label{theorem:main}
The functions $\rvh(\cdot)$ and $\rvg_k(\cdot)$ in Theorem~\ref{theorem6} can be chosen in the following form (assuming that the image of $\rvh(\cdot)$ is one dimension):
\begin{align*}
&\rvh\left([\bz_1^\top,\cdots, \bz_K^\top]^\top=: \bar{\bz}~|~\vc, \mW\right) =  \vc^\top {\bm\sigma}(\mW\bar{\bz})
\\
&\rvg_k\left(\bx~|~T, \{\vu_t, \va_t\}_{t=1}^{T}\right) =   
[
{\bm\sigma}((\vu_1 \otimes \va_{1}) \bx  + \vv)^\top, \cdots,
\\
& \qquad \qquad \qquad \qquad \qquad \quad{\bm\sigma}((\vu_{T}\otimes \va_{T}) \bx+ \vv)^\top]^\top,
\end{align*}
where ${\bm\sigma}(\cdot)$ is the element-wise squashing activation function, and we omit the subscript $k$ all hyper-parameters of $\rvg_k(\cdot)$.
\end{theorem}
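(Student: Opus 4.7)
The plan is to combine the existence guarantee of Theorem~\ref{theorem6} with the classical universal approximation theorem (UAT) for sigmoidal feed-forward networks of Cybenko and Hornik. Theorem~\ref{theorem6} already supplies continuous functions $\tilde{\rvh}$ and $\tilde{\rvg}_1,\dots,\tilde{\rvg}_K$ such that $\tilde{\rvh}\big(\sum_n \tilde{\rvg}_1(\bx_{1,n}),\dots,\sum_n \tilde{\rvg}_K(\bx_{K,n})\big)$ approximates $\rvf$ uniformly on the compact input domain, so it suffices to realize that composition by the two specific two-layer forms claimed in Theorem~\ref{theorem:main}, up to the standard $\epsilon/3$ triangle-inequality bookkeeping.

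First I would apply UAT to $\tilde{\rvh}$: since its input $\bar{\bz}$ ranges over a compact set (a continuous image of the compact input domain), there exist $\vc$, $\tilde{\mW}$, and $\tilde{\vb}$ with $\tilde{\rvh}(\bar{\bz}) \approx \vc^\top \sigma(\tilde{\mW}\bar{\bz}+\tilde{\vb})$ uniformly. Next I would apply UAT to each vector-valued $\tilde{\rvg}_k$, obtaining matrices $\mC_k$, $\mA_k$, and a bias $\vb_k$ with $\tilde{\rvg}_k(\bx) \approx \mC_k\,\sigma(\mA_k\bx+\vb_k)$. Substituting the second approximation into the first, the outer linear map $\mC_k$ is absorbed into the corresponding column block of $\tilde{\mW}$: defining $\mW_k := \tilde{\mW}_k \mC_k$ and $\mW := [\mW_1,\dots,\mW_K]$ turns the composition into $\vc^\top \sigma(\mW\bar{\bz}' + \tilde{\vb})$, where $\bar{\bz}'$ is the vertical concatenation of the $K$ aggregates $\sum_n \sigma(\mA_k\bx_{k,n}+\vb_k)$. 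The residual outer bias $\tilde{\vb}$ is removed by appending a constant-one coordinate to each $\rvg_k$'s output (realized by one dummy hidden unit), so the outer approximator takes exactly the form $\rvh(\bar{\bz}) = \vc^\top\sigma(\mW\bar{\bz})$ claimed in the statement.

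It then remains to exhibit $\sigma(\mA_k\bx+\vb_k)$ as an instance of the structured block form $[\sigma((\vu_t\otimes\va_t)\bx+\vv)^\top]_{t=1}^T$. Choosing the intra-block width $p=1$ collapses each block to a single scalar neuron $\sigma(u_t\va_t^\top\bx + v_t)$ whose effective weight $u_t\va_t$ ranges over every vector in the input space, so taking $T$ equal to the number of rows of $\mA_k$ reproduces $\sigma(\mA_k\bx+\vb_k)$ entry-for-entry. Per-block biases are absorbed either by reading $\vv$ as block-indexed, or, under a strict shared-$\vv$ interpretation, by augmenting $\bx$ with a constant coordinate so the bias folds into $\va_t$.

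The main obstacle I anticipate is convincing oneself that the outer-product factorization $\vu_t\otimes\va_t$ paired with a shared bias $\vv$ is not genuinely restrictive: at first sight it forces the first-layer weight matrix of $\rvg_k$ into block-rank-one shape. The resolution, as sketched above, is that enlarging $T$ with $p=1$ already yields an unrestricted family of scalar sigmoid neurons, and any $p>1$ only enriches the hypothesis class. Closing the argument is then a routine uniform-error computation: for any target tolerance $\epsilon$, tune each of the three approximation stages (Theorem~\ref{theorem6}, UAT on $\tilde{\rvh}$, UAT on each $\tilde{\rvg}_k$) to contribute error at most $\epsilon/3$ scaled by the global Lipschitz constant of $\sigma$ and by $\lVert\vc\rVert$, yielding uniform $\epsilon$-approximation of $\rvf$.
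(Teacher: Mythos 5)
Your argument is correct, but it reaches the claimed form by a genuinely different route from the paper. The paper does not treat the $\rvg_k$ of Theorem~\ref{theorem6} as a black box: it reuses the explicit construction from that theorem's proof, where each $\rvg_k$ is a stack of power sums of scalar projections $\va_t^\top\bx$ (Weyl polarization plus the power-sum generating set), and then applies the universal approximation theorem twice --- once to the outer polynomial $h$ and once to each one-dimensional power function $p_j(y)=y^j$ evaluated at $y=\va_t^\top\bx$. The rank-one blocks $\vu_t\otimes\va_t$ thus emerge constructively: each block is a bundle of scalar neurons fed by one shared linear feature. You instead apply the universal approximation theorem to $\tilde{\rvh}$ and to the full vector-valued $\tilde{\rvg}_k$, absorb the output layer $\mC_k$ into $\mW$, and then observe that the block-rank-one constraint is vacuous once the block width is one. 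Both routes are valid; yours is more modular and independent of the internals of the Theorem~\ref{theorem6} proof, while the paper's explains where the $\vu_t\otimes\va_t$ structure comes from and why blocks of width greater than one are useful (they approximate the higher powers of a shared feature), a point your reading flattens. Your treatment of the shared bias $\vv$ and the missing outer bias (dummy constant unit, input augmentation) is actually more careful than the paper's, which silently drops the index dependence of its biases $v_{j,l'}$ and the outer bias $b_l$ in the final displayed form. The one step you should make fully explicit is the composition error: the substituted aggregates $\sum_n\hat{\rvg}_k(\bx_{k,n})$ leave the compact image of $\sum_n\tilde{\rvg}_k(\bx_{k,n})$, so the outer network must be built, and its Lipschitz constant controlled, on a slightly enlarged compact set; you gesture at this and it is indeed routine.
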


Based on \thref{theorem:main}, we will use neural networks with appropriate structure as specified by the theorem to approximate the embedding function for a node $\nodev$ in \eqref{eq.embed}. In particular, for $\hat{f}(\cdot)$ with one dimension image, it can be cast into the following form:
\small
 \begin{equation}\label{eq.nn-main}
            \sum_{l=1}^L \evc_l {\sigma}\left(\sum_{k=1}^K\sum_{t_k=1}^{T_k}\sum_{q_k=1}^{Q_k} \evw^{(l,k)}_{t_k q_k} \sum_{n=1}^{N_k}\sigma\Big(\evu^{(k)}_{q_k}{\va^{(k)}_{t_k}}^{\top}\bx_{k,n}+ \evv^{(k)}_{q_k}\Big) + \evb_l\right)
        \end{equation}
        \normalsize
        with large enough hyper parameter $T_k$'s and $Q_k$'s and proper real value coefficients $\evc_l, \evb_l, \evw^{(l,k)}_{t_k,q_k}, \evu^{(k)}_{q_k}, \evv^{(k)}_{q_k}$ and $\va^{(k)}_{t_k}$. 

Generally, for an arbitrary node $\nodev\in\gV_k, k\in[K]$, we need to specify a function $\hat{f}$ to aggregate the neighborhood information. However, it will be overfitting for a dataset if we define totally different $\hat{f}$ for each $\nodev$. Hence, we define a function $\hat{f}$ for a node $\nodev$ via only varying the summation over the neighbor size $\left(\sum_{n=1}^{N_k^\nodev}\right)$ with respect to different $N_k^\nodev$, but reusing other parameters such as $T_k, Q_k, \evc_l, \evb_l, \evw^{(l,k)}_{t_k,q_k}, \evu^{(k)}_{q_k},$ and $ \evv^{(k)}_{q_k}$ across the entire graph.

The objective function for the neural network parameter optimization will depend on application. As one example, a two-norm cost function of the embedded vectors $\bx^\nodev$ and the embedding output as provided by the neural network can be minimized for consistency. This is a joint optimization problem: both the embedding vectors $\bx^\nodev$'s and the embedding functions are jointly optimized. The numerical optimization algorithm and complexity are similar to those for standard deep neural networks. In a semi-supervised setting, it is also possible to incorporate
a supervised component into the objective function; see numerical examples in \secref{sec.exp}, and specifically \eqref{eq:homo-optimization}.
    
    \section{Empirical Experiments Study}\label{sec.exp}
    {\bn 
    In the section, we validate and report the performance of the proposed partial permutation invariant set function theorem on various aspects of graph node embedding learning tasks comparing to state-of-the-art algorithms: (1) to evaluate the applicability of {\OM} on embedding problem of general graphs, we conduct experiments on both homogeneous and heterogeneous graphs; and {\mn (2) we also visualize the embedding vectors obtained by {\OM} to 2D space via t-SNE with respect to the true and predicted labels; (3) for ablatoin study, we investigate the impact of hyper-parameters for {\OM} and show the performace on two datasets, Cora and Wikipedia. More experimental results of ablation study are provided in Appendix B. }
    
    \subsection{Evaluation on Homogeneous Graphs}
      First we consider the multi-class node classification problem over the homogeneous graphs. 
      Given a graph with partially labeled nodes, the goal is to learn the representation for each node for predicting the class for unlabeled nodes. To fulfill the learning requirements, we have a basic assumption that the embedding of an arbitrary node in the graph can be calculated via {\OM} with the neighborhood as the input. For short, we denote the {\OM} embedding function to aggregate neighborhood information for a node $\nodev$ by $\bx^\nodev = \left[\hat{f}_1(\gX^\nodev), \hat{f}_2(\gX^\nodev), \cdots, \hat{f}_d(\gX^\nodev)\right]^\top \in \R^d$, where $\gX^\nodev\coloneqq(\mX^\nodev_1, \mX^\nodev_2, \cdots, \mX^\nodev_K)$ contains all embeddings of neighbors of $\nodev$.
      Since this section evaluates the homogeneous graphs, the $K$ in all $\hat{\rvf}(\cdot)$s is set to 1.
      To fulfill the requirement of a specific learning task, we propose an overall learning model with {\OM} by involving an unsupervised component and a supervised component at the same time
      \begin{align}
        \min_{\substack{\{\bx^\nodev\}_{\nodev\in\gV},\\ \{\hat{f}_m\}_{m=1}^d, \theta}} \frac{1}{\lambda|\gV|}\sum_{\nodev\in\gV}\left\|\bx^\nodev - \left[\hat{f}_1(\gX^\nodev),\cdots, \hat{f}_d(\gX^\nodev)\right]^\top\right\|^2 
         + \frac{1}{|\gV_{\textrm{label}}|}\sum_{\nodev\in\gV_{\textrm{label}}}\ell_\theta(\bx^\nodev, \vy^\nodev)&.\label{eq:homo-optimization}
      \end{align}
      The first term of the objective in \eqref{eq:homo-optimization} is the unsupervised learning component, which restricts the representation error between the target node and its neighbors with $L_2$ norm since it is allowed to have noise in a practical graph. 
      The second term 
      is the supervised component, which is flexible to be replaced with any designed learning task on the nodes in a graph. 
      For example, to a regression problem, a least square loss can be chosen to replace $\ell_\theta (\cdot )$ and a cross entropy loss can be used to formulate a classification   problem. 
      
       The details of {\OM} for multi-class case are as follows: 
        \textbf{(1) Supervised Component:} \textit{Softmax} function is chosen to formulate our supervised component in \eqref{eq:homo-optimization}. For an arbitrary embedding $\bx\in\R^d$, we have the probability term as $\text{P}(y=i|\bx) = \frac{\exp(\vw_i^\top \bx + b_i)}{\sum_{j=1}^C \exp(\vw_j^\top \bx + b_j)}$ for predicting $\bx$ with class $i$, where $\vw_i\in\mathbb{R}^{d}$ and $b_i\in\mathbb{R}$ are classifier parameters for class $i$, and $C$ is the number of classes.  
          Therefore, the supervised component in \eqref{eq:homo-optimization} is formulated as $ \frac{1}{|\gV_{\text{label}}|} \sum_{\nodev\in\mathcal{V}_{\textrm{label}}}\sum_{i=1}^C [ -y^\nodev_i \log \text{P}(y^\nodev=i | \bx^\nodev)] + \lambda_w\sum_{i=1}^C \textrm{Reg}(\vw_i) $, where $y^\nodev_i \in \{0, 1\}$ is the true label for training, $\textrm{Reg}(\vw_i)$ is an $L_2$ regularization for $\vw_i$, and $\lambda_w$ is chosen to be $10^{-3}$; 
        \textbf{(2) Unsupervised embedding mapping Component:} The balance hyper-parameter $\lambda_1$ is set to be $0.005$. We follow the formulation in \eqref{eq.nn-main} and $L=16$, $T_1=32$, and $Q_1=16$.
      We apply an ADAM algorithm to compute the effective solutions for the learning variables simultaneously.

      \begin{table*}[!t]
    \caption{Summary of Datasets} 
    \label{tab:models}
    \begin{center}
        {\fontsize{8}{8}\selectfont
        \begin{tabular}{cccccc|cc}
            \toprule
            & \textbf{Cora} & \textbf{Citeseer} & \textbf{Pubmed} & \textbf{Wikipedia} & \textbf{Email-eu} & \textbf{DBLP} & \textbf{BlogCatalog} \\ 
            \midrule
            \#Node & 2,708 & 3,312 & 19,717 & 2,405 & 1,005 &  $27K$ + $3.7K$ & 55,814 + 5,413\\
            \#Edge & 5,429 & 4,732 & 88,651 & 17,981 & 25,571 & 338,210 + 66,832 &  1.4M + 619K + 343K\\ 
            \midrule
            \#Classes & 7 & 6 & 3 & 17 & 42 & 4 (multi-label) & 5 (multi-label)\\
            \bottomrule
        \end{tabular}}
    \end{center}
\end{table*}

  {\mn
     \noindent{\bf Datasets:} We evaluate the performance of PINE and other methods for  comparison on five benchmark datasets: \textit{\textbf{Cora}}~\cite{mccallum2000automating}, \textit{\textbf{Citeseer}}~\cite{giles1998citeseer}, \textit{\textbf{Pubmed}}~\cite{sen2008collective}, \textit{\textbf{Wikipedia}}~\cite{sen2008collective}, and \textit{\textbf{Email-eu}}~\cite{leskovec2007graph}. The details of these five datasets are presented in Table~\ref{tab:models}.
  }
     
    {\mn \noindent {\bf Baseline methods:} To evaluate the learning capability of {\OM}, we compare it with baseline algorithms listed below:
          \begin{itemize}
            \item \textbf{Deepwalk}~\cite{perozzi2014deepwalk} is an unsupervised graph embedding method which relies on the random walk and word2vec method.  For each vertex, we  take 80 random walks with length 40, and set window size as 10. Since deepwalk is {\bf unsupervised}, we apply a logistic regression on the generated embeddings for node classification. \item \textbf{Node2vec}~\cite{grover2016node2vec} is an improved graph embedding method based on deepwalk. We set the window size as 10, the walk length as 80 and the number of walks for each node is set to 100. Similarly, the node2vec is {\bf unsupervised} as well. We apply the same evaluation procedure on the embeddings of node2vec as what we did for deepwalk.
            \item \textbf{Struc2vec}~\cite{ribeiro2017struc2vec} chooses the window size as 10, the walking length as 80, the number of walks from each node as 10, and 5 iterations in total for SGD.
            \item \textbf{GraphWave}~\cite{donnat2018learning} chooses the heat coefficient as 1000, the number of characteristic functions as 50, the number of Chebyshev approximations as 100, and the number of steps as 20.
            \item \textbf{WYS (Watch-Your-Step)}~\cite{abu2017watch} chooses the learning rate as 0.2, the highest power of normalized adjacency matrix as 5, the regularization coefficient as 0.1, and uses the ``Log Graph Likelihood'' as objective function.
            \item \textbf{MMDW}~\cite{tu2016max} is a {\bf semi-supervised} learning framework of graph embedding which combines matrix decomposition and SVM classification. We tune the method multiple times and take 0.01 as the hyper-parameter $\eta$ in the method which is recommended by the authors.
            \item \textbf{Planetoid}~\cite{yang2016revisiting} is a {\bf semi-supervised} learning framework. We set the batch size as 200, learning rate as 0.01, the batch size for label context loss as 200, and mute the node attributes as input while using softmax for the model output.
            \item \textbf{GCN (Graph Convolutional Networks)}~\cite{kipf2016semi} chooses the convolutional neural networks into the {\bf semi-supervised} embedding learning of graph.
            We eliminate the node attributes for fairness as well.
            \item \textbf{GATs (Graph Attention Networks)}~\cite{velickovic2017graph} choose the learning rate as 0.005, the coefficient of the regularization as 0.0005, and the number of hidden units as 64. To make the comparison fair, we mute the node attributes in the training of GATs as well.
          \end{itemize} }
     

    \noindent{\bf Experiment setup and results.}
        For a fair comparison, the dimension of representation vectors is chosen to be the same for all algorithms (the dimension is $64$). The hyper-parameters are fine-tuned for all of them. More experiment environment and comparison details are presented in the Appendix. 
       
        In this  multi-class classification scenario, we use \textit{Accuracy} as the evaluation criterion. The percentage of labeled nodes is chosen from $10\%$ to $90\%$ and the remaining nodes are used for evaluation. 
        All experiments are repeated for five times and we report the mean and standard deviation of the performance of each graph embedding method in Figure~\ref{fig:cora-citeseer-pubmed-wiki-email}.
        We can observe that in most cases, {\OM} outperforms other methods and in few cases, {\OM} performs the second best behind of MMDW.

        \begin{figure*}[!ht]
          \centering
              \centering
              \includegraphics[width=1\linewidth]{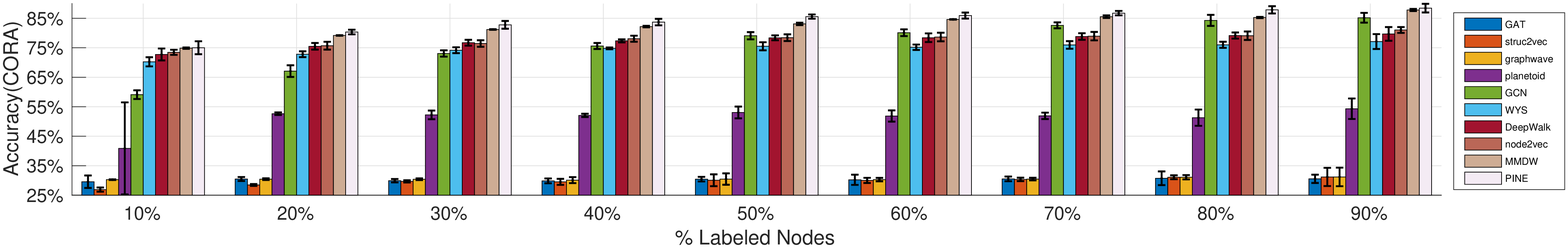}
      \\        \includegraphics[width=1\linewidth]{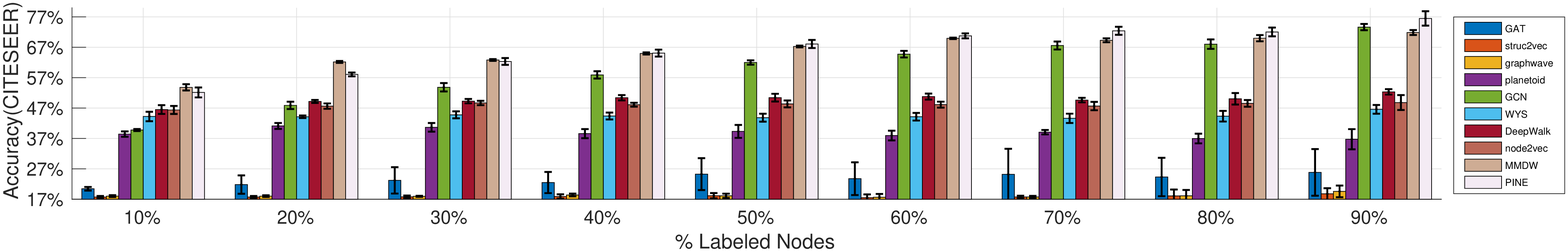}
      \\        \includegraphics[width=1\linewidth]{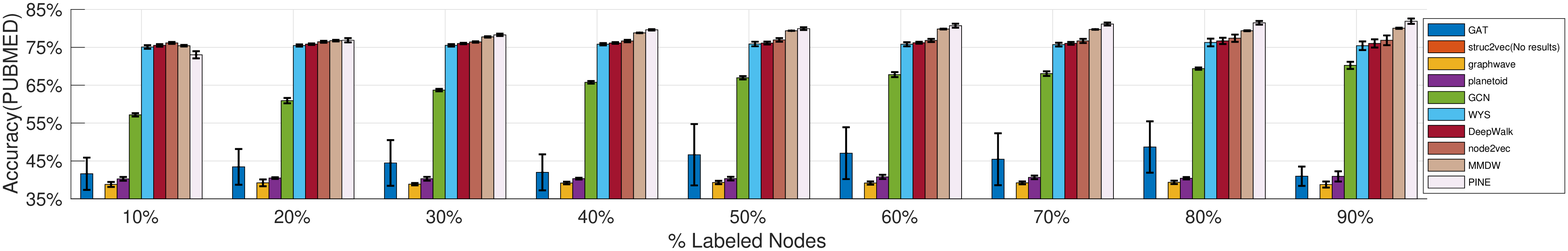}
      \\        \includegraphics[width=1\linewidth]{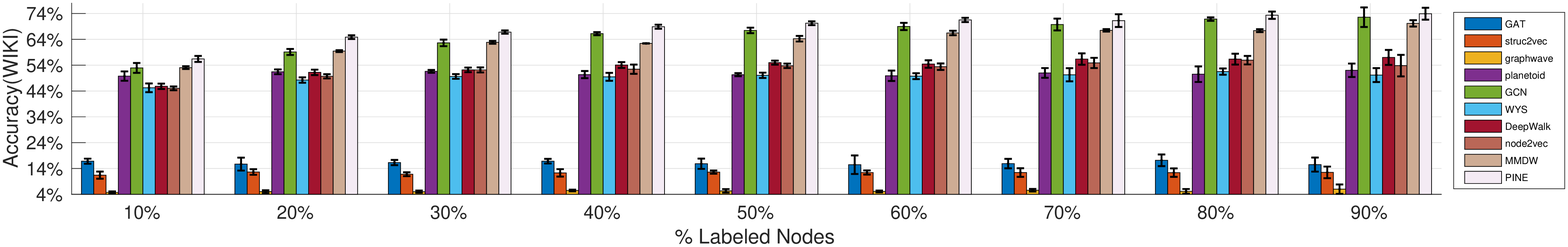}
      \\        \includegraphics[width=1\linewidth]{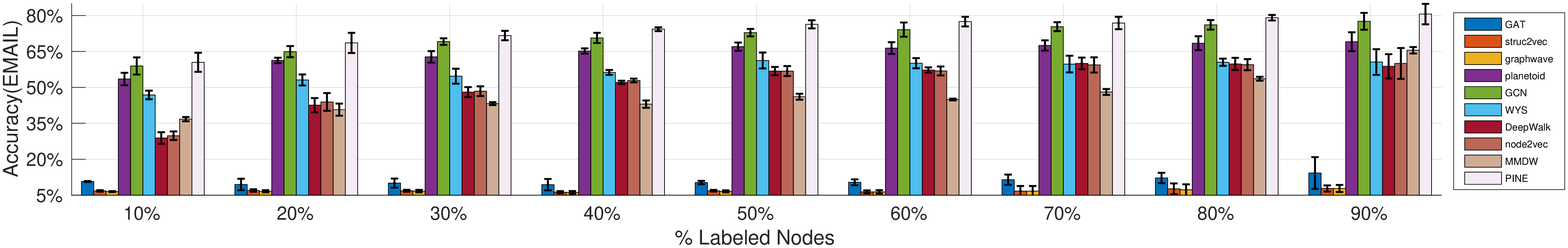}
          \caption{Accuracy (\%) of multi-class classification in Cora, Citeseer, Pubmed, Wikipedia, and Email-eu dataset
          }
          \label{fig:cora-citeseer-pubmed-wiki-email}
      \end{figure*}
      {\mn
      In addition to the reported accuracy under the graph node classification task. We also present the visualization of graph node embedding in the 2D space with the t-SNE method. The results of Cora, Citeseer, Email-eu, Pubmed, and Wikipedia are presented in Fig~\ref{fig:tsne_cora}. We
      present all the figures with the ratio of unlabeled nodes as 50\% for the
      node classification task. For each figure, we illustrate the t-SNE results with respect to the true labels and the predicted ones on test set. As what we expected, we can observe that the embedding vectors can be easily clustered with t-SNE for both true and predicted labels. There is the difference between the true label figure and predicted label one due to the difference of true and predicted labels.

       \begin{figure*}[ht]
         \begin{center}
         \begin{subfigure}[b]{.32\linewidth}
             \includegraphics[width=1\linewidth]{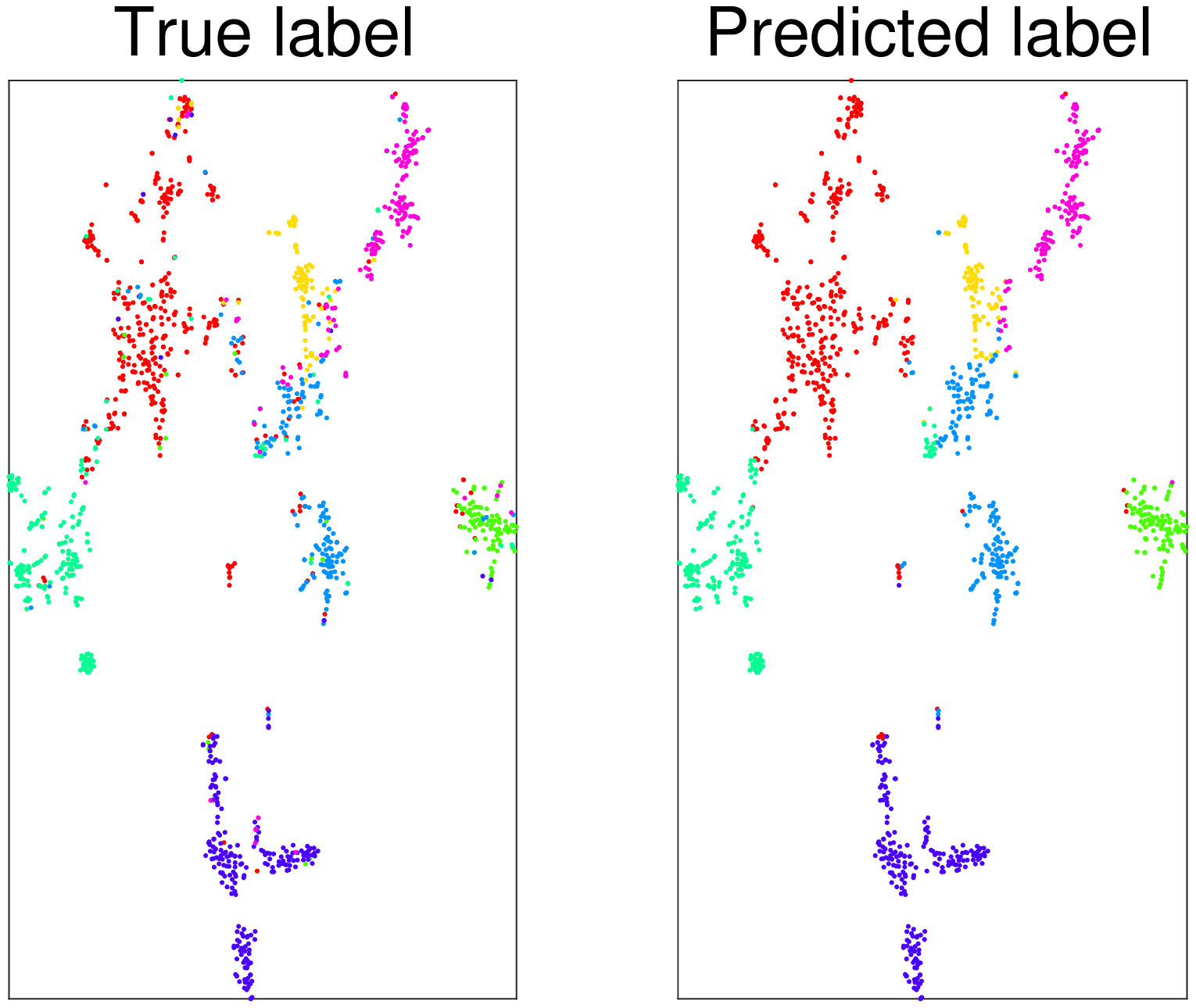}
                     \subcaption{t-SNE of Cora}
         \end{subfigure}
         \begin{subfigure}[b]{.32\linewidth}
             \includegraphics[width=1\linewidth]{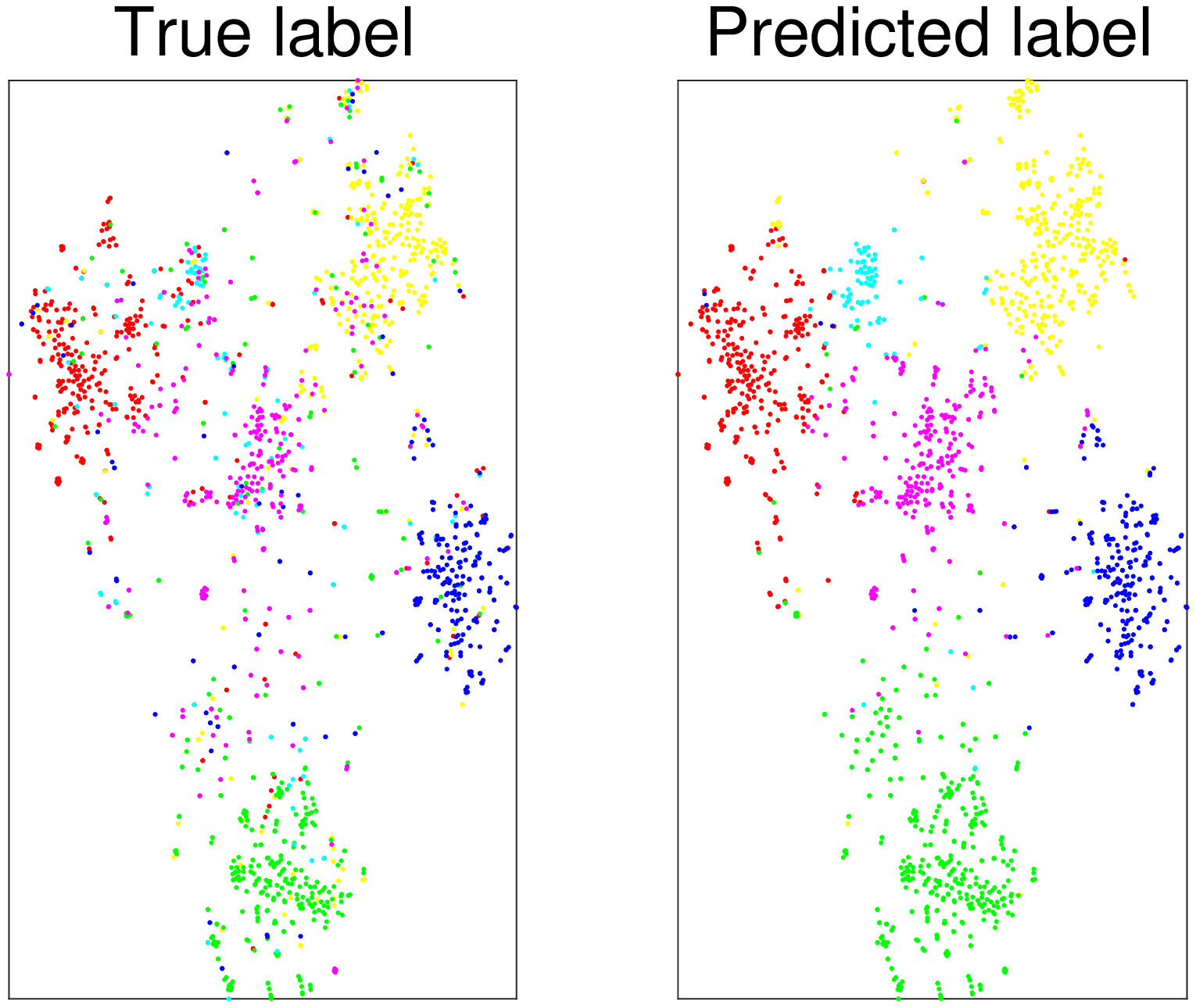}
                     \subcaption{t-SNE of Citeseer}
         \end{subfigure}
         \begin{subfigure}[b]{.32\linewidth}
           \includegraphics[width=1\linewidth]{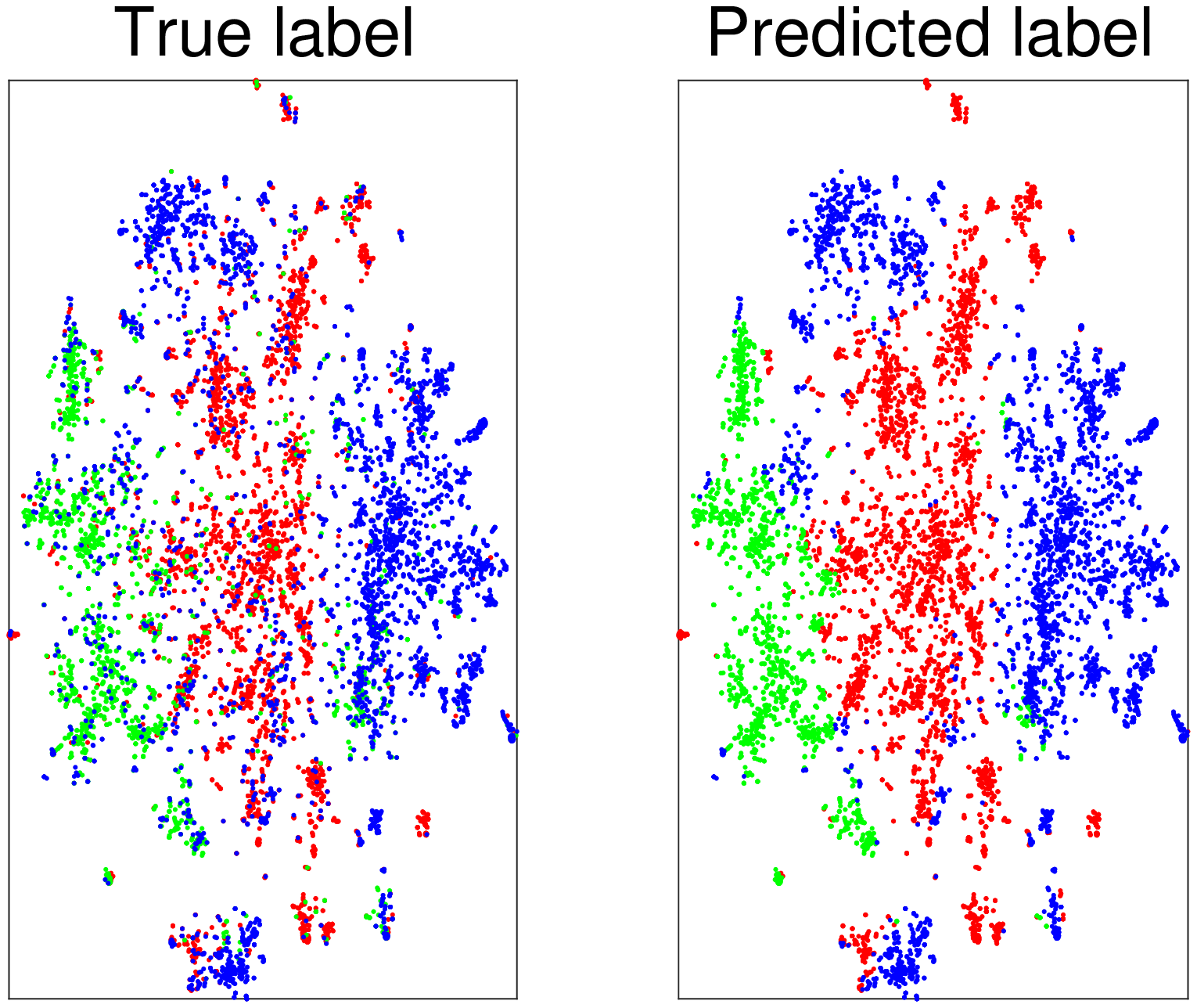}
                   \subcaption{t-SNE of Pubmed}
         \end{subfigure}
         \begin{subfigure}[b]{.32\linewidth}
           \includegraphics[width=1\linewidth]{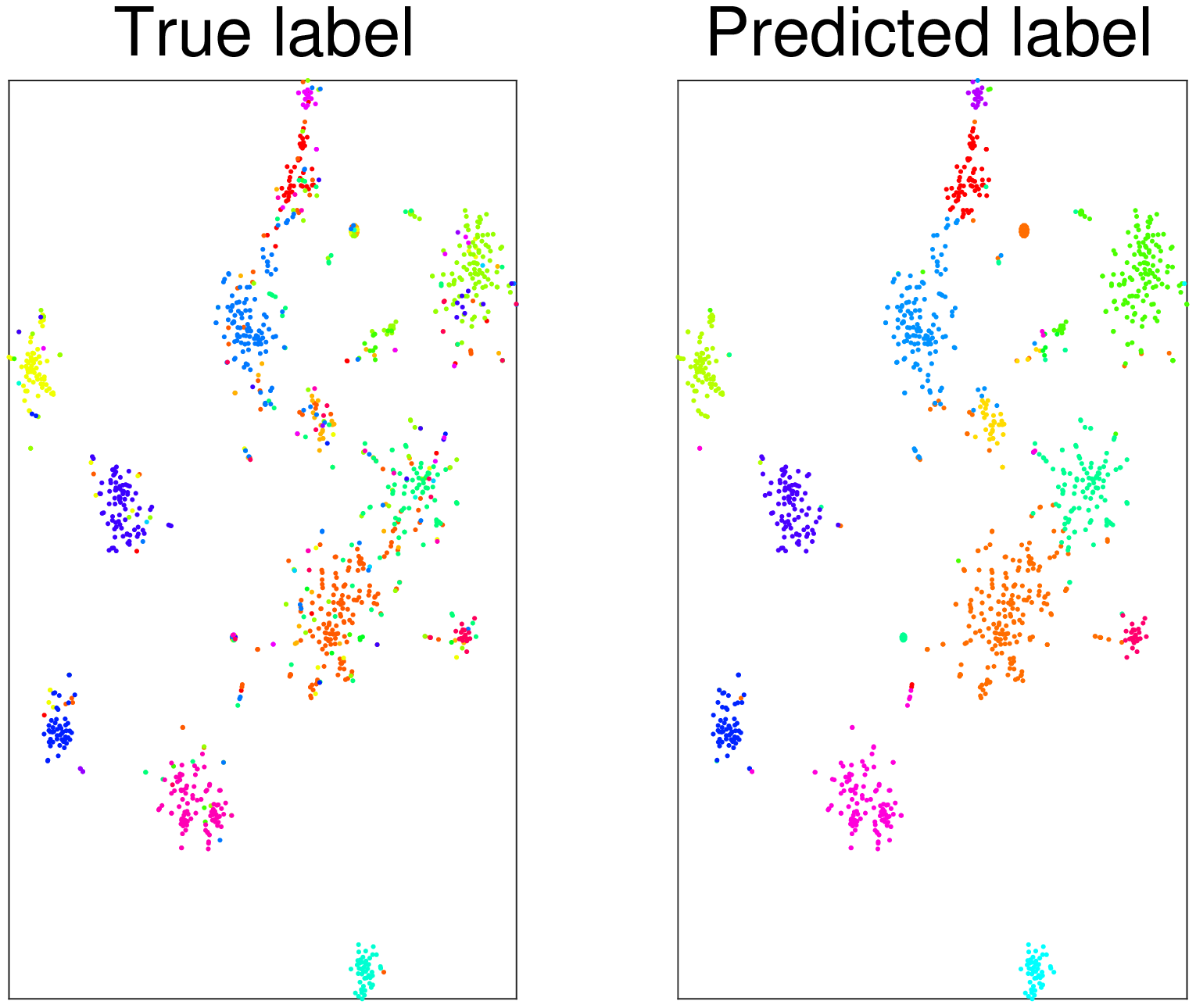}
                   \subcaption{t-SNE of Wikipedia}
         \end{subfigure}
         \begin{subfigure}[b]{.32\linewidth}
           \includegraphics[width=1\linewidth]{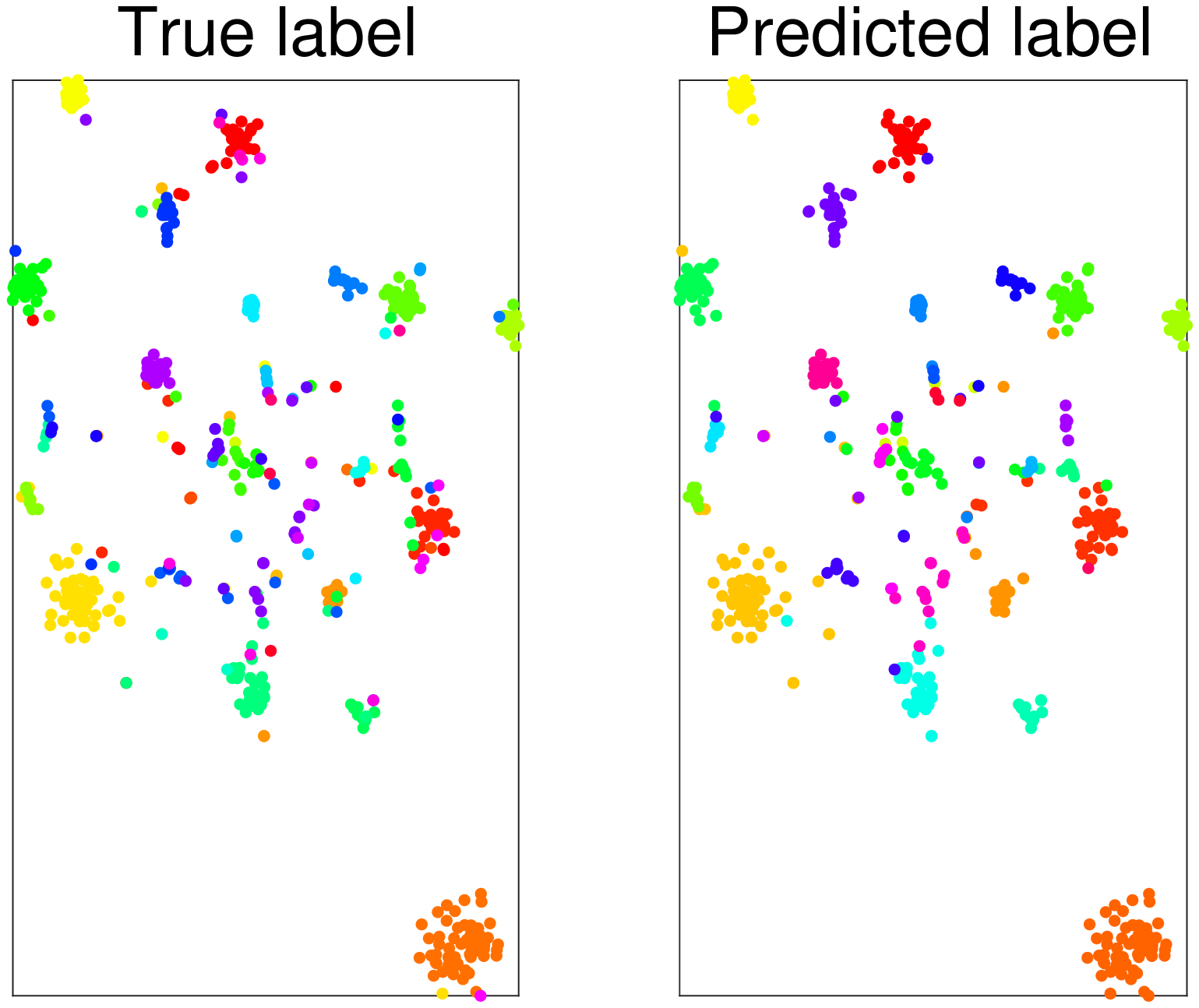}
                   \subcaption{t-SNE of Email-eu}
         \end{subfigure}
         \caption{The t-SNE of Cora, Citeseer, Pubmed, Wikipedia, and Email-eu dataset.}
         \label{fig:tsne_cora}
         \end{center}
         \end{figure*}

        Then, we provide several figures to show the hyper-parameter sensitivity for {\OM}. We take a case study on the sensitivity to the embedding dimensions on Cora and Wikipedia dataset. The results are shown in Figure~\ref{fig:cora_dim}. For dimensions of 8, 16, 32, and 64, we run the experiments for 5 times with the ratio of unlabeled nodes as 50\% for the node classification task and compute the mean and standard deviation of the accuracy results for GCN, MMDW, and {\OM}. As shown in the illustration, the performance rises along the increasing of the dimension of the representations for nodes for GCN, MMDW, and {\OM}, and {\OM} always achieves higher performance.
       \begin{figure*}[ht]
       \begin{center}
       \includegraphics[width=0.49\linewidth]{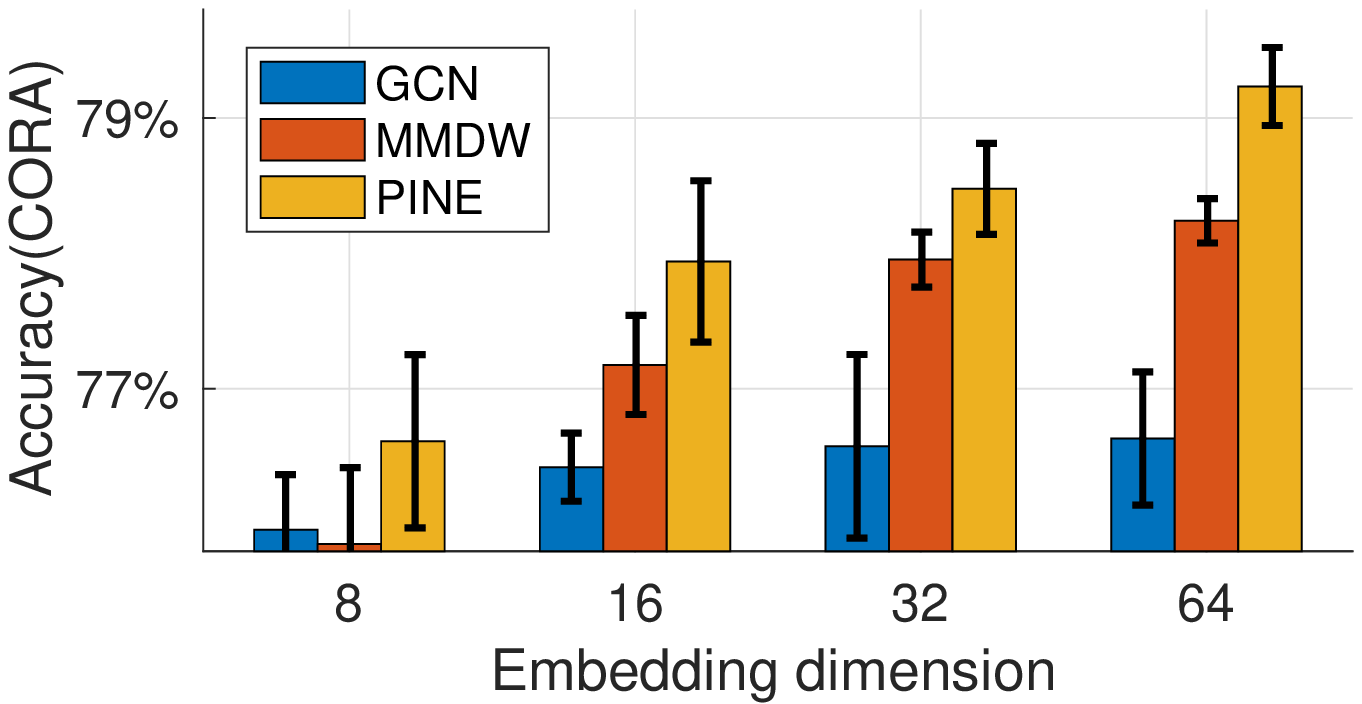}
       \includegraphics[width=0.49\linewidth]{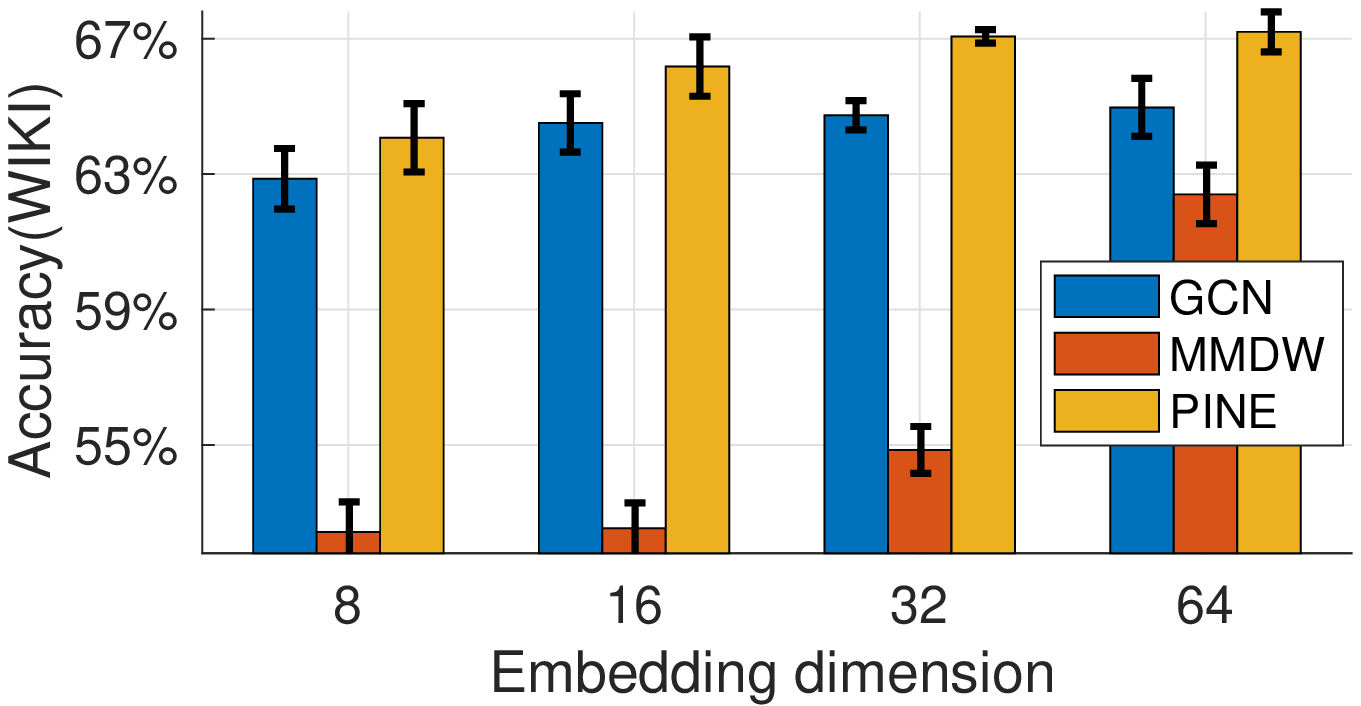}
       \caption{Dimension sensitivity illustration for {\OM} on Cora and
       Wikipedia.}
       \label{fig:cora_dim}
       \end{center}
       \end{figure*}

{\mn   \noindent{\bf Exploration under graph neural network framework}
In addition to the experiments on the comparison of the graph node classification task with our overall model in \eqref{eq:homo-optimization}, we also explore the potential of partial permutation invariant function in the existing Graph Neural Network (GNN). With the study of existing GNNs, we find that they rely on some specific aggregation function to measure the relationship between each target node and its neighborhood. It reminds us of substituting the neighborhood aggregation component with {\OM} in existing graph neural network (GNN) e.g., \textit{\textbf{GraphSAGE}}~\cite{GraphSAGE}  and \textit{\textbf{GAT}}~\cite{velickovic2017graph}, and evaluate its capability on neighborhood aggregation. It is expected that {\OM} under the GNN framework performs better than the orignal   \textit{\textbf{GraphSAGE}}~\cite{GraphSAGE}  and \textit{\textbf{GAT}}~\cite{velickovic2017graph}. 
(1) Dataset \textit{\textbf{PPI}} is a set of protein-protein interaction graphs, which poses an \textbf{inductive learning} problem. We take this chance to  validate   {\OM} by learning from 20 graphs with another 2 graphs for validation, and then classifying nodes in 2 other different graphs into 121 classes. We compare {\OM} with \textit{\textbf{GraphSAGE}}    and  \textit{\textbf{GAT}}  on this task. The results in Table~\ref{tab:PPI} show that  {\OM} has higher classification accuracy on the nodes in the previously unseen graphs. That justifies the superior performance of {\OM} in \textbf{inductive} setting.
(2) \textit{\textbf{Reddit}}~\cite{GraphSAGE}   is a large dataset including 232,965 nodes. We validate   {\OM} on this dataset to see its generalizability on learning from old data to predict new data. We use the first 20 days' data as the training set and the rest split up into the validation (30\%) and test set (70\%), which should be classified with multi-labels by choosing as accurate as possible from 50 labels.
Table~\ref{tab:Reddit} shows that  {\OM} has the best performance on this challenging task. 
Overall, we can conclude that 
{\OM} can be a suitable aggregator in GNN framework without subtle aggregation structure design.

\begin{table*}[t]
\caption{Inductive prediction results for the PPI dataset (micro-averaged F1 scores)} 
\label{tab:PPI}
\begin{center}
\begin{small}
\begin{sc}
\begin{tabular}{c|cc|ccc}
\hline 
\textbf{Methods} & \textbf{GraphSAGE-pool}~\cite{GraphSAGE} & \textbf{PINE}& \textbf{GraphSage}~\cite{velickovic2017graph}& \textbf{GAT}~\cite{velickovic2017graph}& \textbf{PINE} \\
\hline
Results&0.600&0.637&0.768&0.973& {\bf 0.985}\\
\hline
\end{tabular}
\end{sc}
\end{small}
\end{center}
\vskip -0.1in
\end{table*}

\begin{table*}[t]
\caption{Prediction results for the Reddit dataset (micro-averaged F1 scores)} 
\label{tab:Reddit}
\begin{center}
\begin{small}
\begin{sc}
\begin{tabular}{c|cccc}
\hline 
\textbf{Methods} & \textbf{Deepwalk} & \textbf{Deepwalk+features} & \textbf{GraphSAGE-pool}~\cite{GraphSAGE} & \textbf{PINE} \\
\hline
Results&0.324&0.691&0.949& {\bf 0.951}\\
\hline
\end{tabular}
\end{sc}
\end{small}
\end{center}
\end{table*}

}
    }


    \subsection{Comparison on heterogeneous graphs}
    We next conduct evaluation on heterogeneous graphs, where the learned node embedding vectors are used for multi-label classification. 
    Since multiple types of nodes are presented in heterogeneous graphs, we substitute the unsupervised embedding mapping component with 
    $$\sum_{k=1}^K \frac{1}{\lambda_k|\gV_k|}\sum_{\nodev\in \gV_k} \left\|\bx^\nodev - \left[\hat{f}_1(\gX^\nodev),\cdots, \hat{f}_d(\gX^\nodev)\right]^\top   \right\|^2.$$
   The supervised component in a  multi-label setting can be addressed by formulating a set of binary classification problem (one for each label).
    Therefore, \textbf{(1) Supervised Component:} 
        we apply logistic regression for each instance $x$ and its $i$-th label $y_i$ via letting 
            $\bar{\ell}(\bx, y_i) = \log(1 + \exp(\vw_i^\top \bx + b_i)) - y_i(\vw_i^\top \bx + b_i)$, 
            where $\vw_i\in\mathbb{R}^d$ and $b_i\in\mathbb{R}$ are classifier parameters for the $i$-th label. Then, defining $y^\nodev_i \in \{0,1\}$ to be the true label for training, the supervised component in \eqref{eq:homo-optimization} is formulated as $\frac{1}{|\gV_{\text{label}}|}\sum_{\nodev\in \gV_{\text{label}}} \sum_{i=1}^C \bar{\ell}(\bx^\nodev, y^\nodev_i) + \lambda_w \sum_{i=1}^C \textrm{Reg}(\vw_i)$, where $C$ is the number of labels, $\textrm{Reg}(\vw_i)$ is the regularization term for $\vw_i$, and $\lambda_w$ is chosen as $10^{-4}$; \textbf{(2) Unsupervised Embedding Mapping Component:} The balance hyper-parameter $[\lambda_1, \lambda_2]$ is set to be [0.2, 200]. And the hyper-parameter $[L, T_1, Q_1]$ in \eqref{eq.nn-main} is set to be $[8, 16, 8]$.
   
    {\mn
   \noindent{\bf Datasets:} The applied datasets include: 
   \textit{\textbf{DBLP}}~\cite{ji2010graph} is an academic community network. Here we obtain a subset of the large network with two types of nodes, authors and key words from authors' publications. The generated subgraph includes $27K$ (authors) + $3.7K$ (key words) vertexes. The link between a pair of author indicates the coauthor relationships, and the link between an author and a word means the word belongs to at least one publication of this author. There are 66,832 edges between pairs of authors and 338,210 edges between authors and words. Each node can have multiple labels out of four.
   \textit{\textbf{BlogCatalog}}~\cite{Wang-etal10} is a social media network with 55,814 users and according to the interests of users, they are classified into multiple overlapped groups. We take the five largest groups to evaluate the performance of methods. Users and tags are two types of nodes. The 5,413 tags are generated by users with their blogs as keywords. Therefore, tags are shared with different users and also have connections since some tags are generated from the same blogs. The number of edges between users, between tags and between users and tags are about 1.4M, 619K and 343K,  respectively. Each user is associated with multiple labels out of five. 
    The total number of labels is five and due to the multilabel classification setting, each user may have several possible labels.
    }
    
   \noindent{\bf Baseline Methods:}
    To illustrate the valid performance of {\OM} on heterogeneous graphs, we conduct the experiments on two stages: 
    (1) comparing {\OM} with Deepwalk~\cite{perozzi2014deepwalk} and node2vec~\cite{grover2016node2vec} on the graphs by treating all nodes as the same type ({\OM} with $K=1$ in a homogeneous setting);
    (2) comparing {\OM} with the state-of-the-art heterogeneous graph embedding method, \textit{metapath2vec}~\cite{dong2017metapath2vec}, in a heterogeneous setting. The hyper-parameters of the method are fine-tuned and \textit{metapath2vec++} is chosen as the option for the comparison. 
    
   \noindent{\bf Experiment Setup and Results:}
        For the datasets DBLP and BlogCatalog, we conduct the experiments on each of them and compare the performance among all methods mentioned above. 
        Since it is a multi-label classification task, we take \textit{F1-score (macro, micro)} as the evaluation metrics for the comparison. The users in BlogCatalog or authors in DBLP work are classification targets.  
        We vary the ratio of labeled nodes from 10\% to 90\%,   repeat all experiments for five times and report the mean and standard deviation of their performance in the Figure~\ref{fig:multilabel}. 

        \begin{figure*}[!t]
          \centering
        \includegraphics[width=1\linewidth]{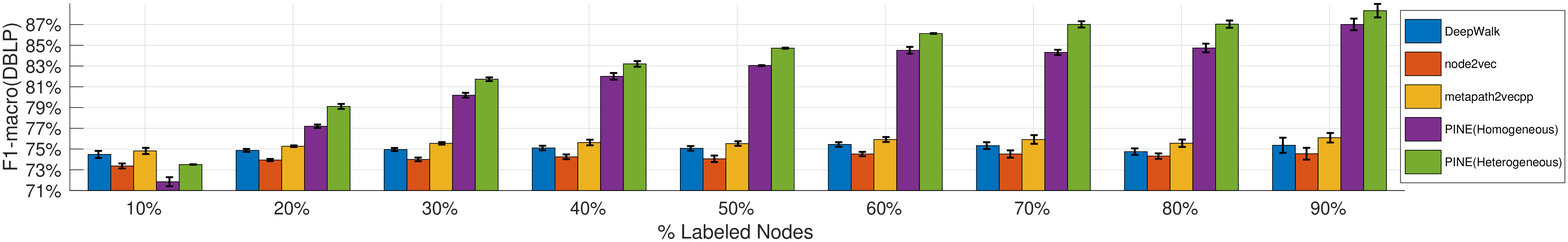}\\
        \includegraphics[width=1\linewidth]{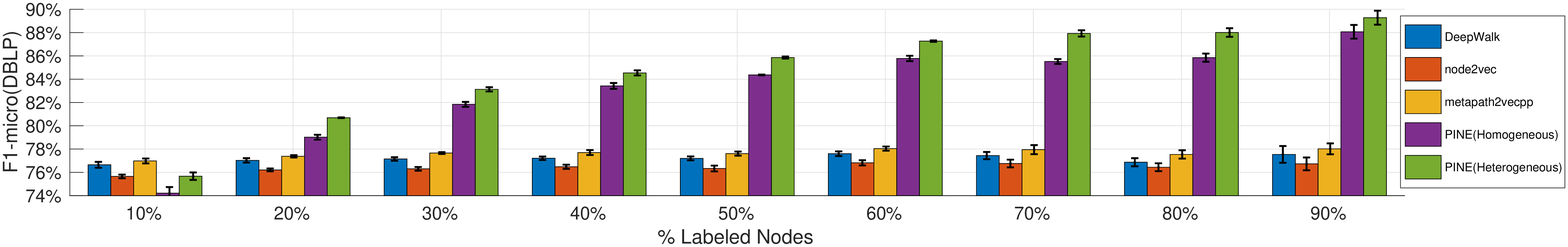}\\
          \includegraphics[width=1\linewidth]{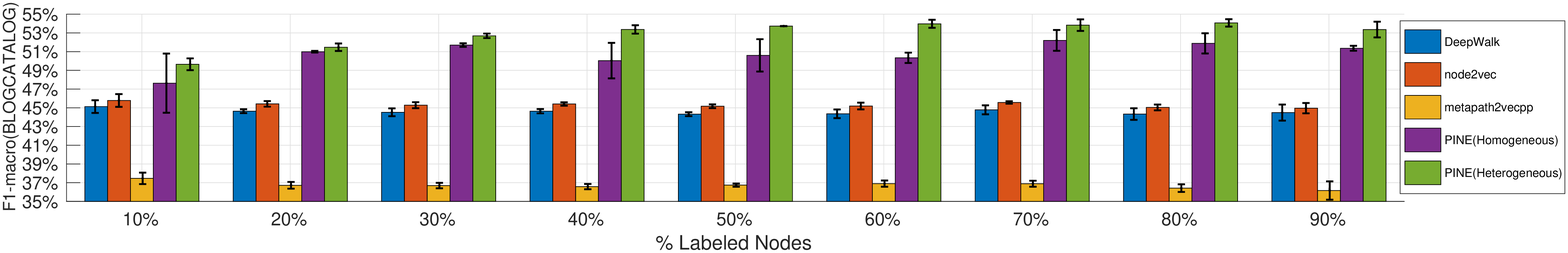}\\
          \includegraphics[width=1\linewidth]{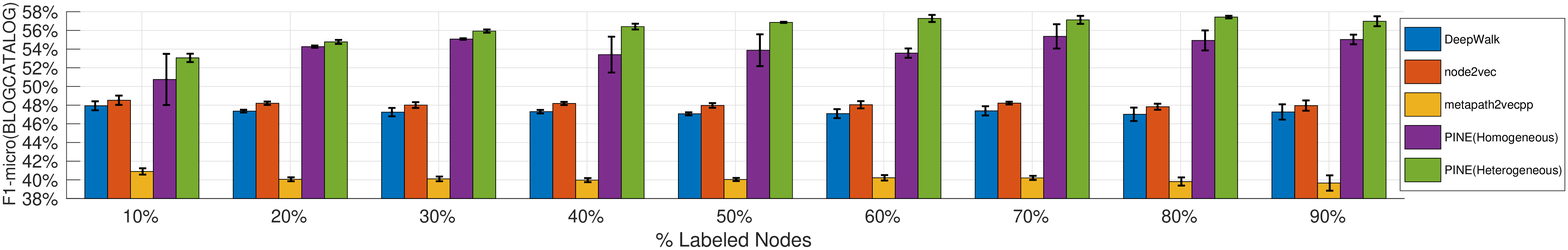}\\
          \caption{F1-score (macro, micro) (\%) of multi-label classification  in heterogeneous graphs}
          \label{fig:multilabel}
      \end{figure*}

        We can observe that in most cases, {\OM} in heterogeneous setting has the best performance. {\OM} in homogeneous setting is better than deepwalk and node2vec in  the same homogeneous setting, and is even better than \textit{metapath2vec++} in heterogeneous setting (achieving  the second best results). Overall, the superior performance of {\OM} in Figure~\ref{fig:cora-citeseer-pubmed-wiki-email} and ~\ref{fig:multilabel} demonstrates the validity of our proposed universal graph embedding mechanism.

  {\mn

  }

\section{Conclusion and Future Work}
  
    To summarize the whole paper, we propose {\OM}, a general graph embedding solution with the novel notion of partial permutation invariant set function, that in principle can capture arbitrary dependence among neighbors and automatically decide the significance of neighbor nodes at different distance for both homogeneous and heterogeneous graphs. We provide a theoretical guarantee for the effectiveness of the whole model. 
    Through extensive experimental evaluation, we show that {\OM} offers better performance on both homogeneous and heterogeneous graphs, compared to stochastic trajectories based, matrix analytics based and graph neural network based state-of-the-art algorithms.
    For the future work, our model can be extended to more general cases, e.g., involving the rich content  information out of graph neighborhood structures.


\bibliographystyle{IEEEtran}
\bibliography{bare}

\newpage
\clearpage
\appendix
{\mn
\section{Partial permutation invariant maps}
}

\subsection{Definitions: Permutation Invariant Maps and Polynomials}

\begin{definition}
\about{Symmetric group $S_N$} Given an index set $\sA = \{1,2,\cdots,N\}$,
The set of all one-to-one mappings $\pi:\sA\longrightarrow \sA$ forms the
symmetric group $S_N$ with function compositions as the group action.
\end{definition}

For brevity, we will denote $\mX\coloneqq[\bx_1, \bx_2, \cdots, \bx_N]$.
And its permutation is denoted as $T_\pi\mX \coloneqq
[\bx_{\pi(1)},\bx_{\pi(2)},\cdots, \bx_{\pi(N)}]$ for an arbitrary $\mX \in
\R^{M \times N}$, where $\pi \in S_N$, $\pi(n)\in \sA$ is the index at
position $n$ after permutation and $\bx_{\pi(n)}$ is the $\pi(n)$-th column
of $\mX$. {\mn The $T_\pi\mX$ defined here is equivalent to the permutation matrix notation we used in Definition \ref{eq.permutation_matrix_invariant}. As the proofs are based on symmetric group action, we also give the definition of permutation invariant map and partially permutation invariant map based on symmetric group action in the following.}

\begin{definition} \label{def.invariant}
\about{Permutation invariant map or $S_N$-invariant map} A continuous real
valued map $f:\bbR^{M\times N}\longrightarrow\bbR$ is permutation invariant if
\begin{equation}
f(T_\pi\mX) = f(\mX)
\end{equation}
for all $\pi \in S_N$ and all $\mX \in \bbR^{M\times N}$.
\end{definition}

\begin{definition} \label{def.partial_invaraint}
\about{Partially permutation invariant map} Given a series of symmetric group
$S_{N_1}, S_{N_2}, \cdots, S_{N_K}$, and $W \coloneqq W_1\times W_2 \cdots
\times W_K $ where $W_k \coloneqq \bbR^{{M_k}\times N_{k}}$, a continuous real
valued map $f:W\longrightarrow\bbR$ is partially permutation invariant if
\begin{equation}
f\left(T_{\pi_1}\mX_1,T_{\pi_2}\mX_2,\cdots,T_{\pi_K}\mX_K\right)
= f(\mX_1, \mX_2, \cdots, \mX_K)
\end{equation}
for all ${\pi_k} \in S_{N_k}$ and all $\mX_k \in W_k$.
\end{definition}

We call a polynomial that is a permutation invariant map \emph{permutation
invariant polynomial}. When the symmetric group involved is significant, we
will use terms like $S_N$-invariant polynomial. Similarly, a polynomial that
is a partially permutation invariant map will be called \emph{partially
(permutation) invariant polynomial}.

\subsection{Proof of \thref{theorem6}}\label{apx:theorem3-1}

A sketch of the proof ideas is as follows. Stone-Weierstrass theorem states
that polynomials are dense in the space of continuous functions. Based on this
result, we first show in \leref{le.dense} that partially permutation invariant
polynomials are dense in partially invariant continuous functional space.
Next, our main idea is to find a finite generating set for the partially
invariant functions so that any partially invariant function can be
represented as a polynomial of that generating set. There is a well-known
generating set for permutation invariant polynomials $f:\bbR^{N}
\longrightarrow \bbR$: the power sum polynomials. Polarization is introduced
to extend the generating set to the case of $f:\bbR^{M\times N}
\longrightarrow \bbR$.

\subsubsection{Polarization}

The goal of polarization is to represent polynomial invariant of a higher
dimension in terms of invariant polynomials of a lower dimension. The lemma in
the following provides an invariant representation of matrix-argument
invariant polynomial in terms of vector-argument invariant polynomial.

The group $S_N$ only permutes the columns of $\bbR^{M\times N}$. So given two
spaces $W = \bbR^{M\times N} $ and $W = \bbR^{M'\times N}$, and a linear
mapping
\begin{equation}\label{eq.linear_op}
    \mathbf{A}: W \longrightarrow W'
\end{equation}
$\mathbf{A}$ is commutable with with permutation; i.e., $\mathbf{A}(T_\pi\mX)
= T_\pi(\mathbf{A}\mX)$. If $f$ is a permutation invariant on $W'$ then
$f\circ \mathbf{A}$ is a permutation invariant on $W$. In the following lemma,
we take $M'=1$.

{\rn
\begin{lemma}\label{le.polar}
\about{Weyl's Polarization}
For any polynomial invariant $f$ on $\bbR^{M\times N}$, there exist a series
of $1\times M$ vectors $\{\ba_t\}_{t=1}^T$ and a series of $S_N$-invariant
polynomials $f_t$ on $\bbR^{1\times N}$, such that $f$ can be represented by
\[
f(\mX) = \sum_{t=1}^T f_t(\ba_t^\top \mX).
\]
\end{lemma}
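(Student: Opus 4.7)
The plan is to combine the classical Weyl polarization trick with the first fundamental theorem for multi-symmetric polynomials. The key observation is that the ring of $S_N$-invariant polynomials on $\bbR^{M\times N}$ is generated by the multi-power sums $p_\alpha(\mX) := \sum_{n=1}^N \prod_{i=1}^M x_{i,n}^{\alpha_i}$ indexed by multi-indices $\alpha \in \mathbb{Z}_{\geq 0}^M$, and each such generator is recoverable from a univariate power sum of the projected vector $\ba^\top \mX \in \bbR^{1\times N}$.

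First I would establish the polarization identity: letting $\by_n$ denote the $n$-th column of $\mX$, a direct application of the multinomial theorem gives
\[
p_k(\ba^\top \mX) \;=\; \sum_{n=1}^N (\ba^\top \by_n)^k \;=\; \sum_{|\alpha|=k} \binom{k}{\alpha}\, \ba^\alpha\, p_\alpha(\mX),
\]
which exhibits $p_k(\ba^\top \mX)$ as a homogeneous degree-$k$ polynomial in the formal vector $\ba$ whose coefficients are exactly the multi-power sums of total degree $k$. Choosing $T_k := \binom{k+M-1}{M-1}$ vectors $\ba_1,\ldots,\ba_{T_k} \in \bbR^M$ in sufficiently general position makes the matrix $\bigl[\binom{k}{\alpha}\,\ba_t^\alpha\bigr]_{t,\alpha}$ a generalized Vandermonde matrix, hence invertible, so each generator is realized as $p_\alpha(\mX) = \sum_t c_{t,\alpha}\, p_k(\ba_t^\top \mX)$. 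This already has the target form $\sum_t f_t(\ba_t^\top \mX)$ with $f_t(\by) := c_{t,\alpha}\, p_k(\by)$, an $S_N$-invariant polynomial on $\bbR^{1\times N}$.

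The remaining step is to extend the representation from single generators to an arbitrary polynomial invariant. By the first fundamental theorem of multi-symmetric polynomials, every $S_N$-invariant $f$ is a polynomial expression in $\{p_\alpha\}$, so by linearity it suffices to handle a monomial $\prod_j p_{\alpha^{(j)}}$. The plan is to polarize at the level of the product itself: writing $k_j := |\alpha^{(j)}|$, the expression $\prod_j p_{k_j}(\ba^\top \mX)$ is itself of the form $Q(\ba^\top \mX)$ for the single $S_N$-invariant polynomial $Q := \prod_j p_{k_j}$ on $\bbR^{1\times N}$, and its expansion in $\ba$ yields a linear system whose unknowns are the products of generators $\prod_j p_{\beta^{(j)}}$ with $|\beta^{(j)}|=k_j$. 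The hard part is the combinatorial bookkeeping in this last step: one must argue that, by varying the partition shape of $Q$ as well as the Vandermonde vectors $\ba_t$, one obtains enough independent linear relations to invert for every product of generators and thereby reconstruct $f$ in the required single-projection summation form.
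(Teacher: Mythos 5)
Your first two steps are sound: the multinomial identity $p_k(\ba^\top\mX)=\sum_{|\alpha|=k}\binom{k}{\alpha}\ba^\alpha p_\alpha(\mX)$ plus a generic-point inversion does recover every single multi-power-sum generator $p_\alpha$ as a linear combination of univariate power sums of projections, and this is exactly the polarization step the paper relies on downstream. The gap is the final step, and it is not merely unfinished bookkeeping --- the linear system you propose to invert is genuinely underdetermined, and no choice of the shapes $Q$ or of the vectors $\ba_t$ repairs it. For a product shape $(k_1,\dots,k_J)$ the unknowns are the products $\prod_j p_{\beta^{(j)}}$ with $|\beta^{(j)}|=k_j$, but the equations are indexed only by the monomials $\ba^\gamma$ with $|\gamma|=\sum_j k_j$, of which there are just $\binom{\sum_j k_j+M-1}{M-1}$; already for the shape $(2,2)$ with $M=2$ this is six unknowns against five equations, and distinct shapes contribute disjoint sets of unknowns once $N$ is large enough that the multi-power sums are algebraically independent. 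Concretely, the coefficient of $a^2b^2$ in $p_2(\ba^\top\mX)^2$ is $2p_{(2,0)}p_{(0,2)}+4p_{(1,1)}^2$, and this fixed combination is the only access any choice of $\ba_t$ ever gives you to those two products; $p_{(1,1)}^2$ by itself is not reachable. A dimension count shows the failure is absolute rather than a defect of your method: for $M=N=2$ the degree-four invariants form a $19$-dimensional space, while sums $\sum_t f_t(\ba_t^\top\mX)$ span at most $3\times 5=15$ dimensions in that degree.

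What this exposes is that the additive conclusion of the lemma is stronger than what polarization provides. Weyl's polarization, and the theorem of Yarotsky that the paper's own one-line proof cites, give that the maps $\mX\mapsto p_j(\ba_t^\top\mX)$ generate the invariant ring \emph{as an algebra}: every invariant $f$ equals $P\big(\{p_j(\ba_t^\top\mX)\}_{j,t}\big)$ for some polynomial $P$, which in general contains cross terms such as $p_2(\ba_1^\top\mX)\,p_2(\ba_2^\top\mX)$ that are not functions of any single projection. Your argument, stopped after the single-generator step, proves precisely this algebra version, and that weaker statement is all the proof of Theorem~\ref{theorem6} actually consumes: there each $p_j(\ba_t^\top\mX)=\sum_n(\ba_t^\top\bx_n)^j$ becomes one coordinate of the pooled feature $\sum_n\bg(\bx_n)$, and the outer map $h$ is an arbitrary polynomial, so cross terms are absorbed into $h$. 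The repair is therefore to restate the conclusion as ``$f$ is a polynomial in the $f_t(\ba_t^\top\mX)$'' (equivalently, in the $p_j(\ba_t^\top\mX)$); with that change your proof closes, whereas the summation form as literally stated cannot be established.
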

}

\begin{proof}
This result is form of Weyl's poloarization \cite{weyl2016classical}. It
follows, e.g., from the Theorem 2.3 in
\cite{yarotsky2018universal} by taking the multiplicities $m'_\alpha$ of the
$\Gamma$-modules $V'_\alpha$ equal to 1, taking the group to be $S_N$, and
taking the $f_t$'s from the generating set of vector-argument invariant
polynomials.
\end{proof}

\subsubsection{Partially Permutation Invariant Polynomials}

The following lemma shows that it is sufficient to use partially invariant
polynomials to approximate partially invariant functions.

\begin{lemma} \label{le.dense}
\about{Denseness of Partially Invariant Polynomials} For any partially
invariant function $f$ on $W = W_1 \times W_2 \cdots \times W_K$ where $W_k =
\bbR^{M_k \times N_k}$ and for any $\epsilon >0$, there exists a partially
invariant polynomial $\hat{f}$ such that
$|\hat{f}(\mathcal{X})-f(\mathcal{X})|<\epsilon$ for all $\mathcal{X}\in W$.
\end{lemma}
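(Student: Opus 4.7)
The plan is to combine the classical Stone--Weierstrass polynomial approximation with a Reynolds-style symmetrization over the product group $S_{N_1}\times\cdots\times S_{N_K}$. The reason this should work cleanly is that symmetrization (a) preserves the polynomial property because it is a finite sum, and (b) does not worsen the approximation error against a target that is \emph{already} partially invariant, since each summand can be compared with a permuted copy of $f$ that equals $f$ itself.

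More concretely, I will proceed as follows. First, the lemma implicitly lives on a compact subset $\Omega\subset W$ (the setting of \thref{theorem6}); without loss of generality I enlarge $\Omega$ to $\widetilde\Omega:=\bigcup_{\pi_1,\ldots,\pi_K}(T_{\pi_1}\times\cdots\times T_{\pi_K})\,\Omega$, which is still compact as a finite union of continuous images of a compact set, and is by construction invariant under the product group action. Extending $f$ by its invariance to all of $\widetilde\Omega$ keeps it continuous and partially invariant. By the standard Stone--Weierstrass theorem, for any $\epsilon>0$ there exists an ordinary polynomial $p$ on $W$ with
\begin{equation*}
\sup_{\mathcal{X}\in\widetilde\Omega}\bigl|p(\mathcal{X})-f(\mathcal{X})\bigr|<\epsilon.
\end{equation*}
Next, I define the symmetrization (Reynolds average)
\begin{equation*}
\hat f(\mX_1,\ldots,\mX_K):=\frac{1}{\prod_{k=1}^{K} N_k!}\sum_{\pi_1\in S_{N_1}}\!\cdots\!\sum_{\pi_K\in S_{N_K}} p\bigl(T_{\pi_1}\mX_1,\ldots,T_{\pi_K}\mX_K\bigr).
\end{equation*}
Being a finite linear combination of polynomials (each $p\circ(T_{\pi_1}\times\cdots\times T_{\pi_K})$ is a polynomial because the $T_{\pi_k}$ act by permuting coordinates), $\hat f$ is itself a polynomial. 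Partial invariance follows by the usual group-average argument: applying any $(T_{\sigma_1},\ldots,T_{\sigma_K})$ merely reindexes the sum over $(\pi_1,\ldots,\pi_K)$ and leaves $\hat f$ unchanged.

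For the error estimate I use the invariance of $f$ crucially. For every $\mathcal{X}\in\widetilde\Omega$ and every tuple $(\pi_1,\ldots,\pi_K)$, the point $(T_{\pi_1}\mX_1,\ldots,T_{\pi_K}\mX_K)$ lies in $\widetilde\Omega$, and $f$ evaluated there equals $f(\mathcal{X})$. Hence
\begin{align*}
\bigl|\hat f(\mathcal{X})-f(\mathcal{X})\bigr|
&=\Bigl|\tfrac{1}{\prod_k N_k!}\sum_{\pi_1,\ldots,\pi_K}\bigl[p(T_{\pi_1}\mX_1,\ldots,T_{\pi_K}\mX_K)-f(T_{\pi_1}\mX_1,\ldots,T_{\pi_K}\mX_K)\bigr]\Bigr|\\
&\le\tfrac{1}{\prod_k N_k!}\sum_{\pi_1,\ldots,\pi_K}\bigl|p(T_{\pi_1}\mX_1,\ldots)-f(T_{\pi_1}\mX_1,\ldots)\bigr|\;<\;\epsilon,
\end{align*}
which gives the claim on $\widetilde\Omega$ and, in particular, on $\Omega$.

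Honestly, there is no deep obstacle here; the argument is the standard ``averaging over the group'' trick. The only subtle point worth being careful about is the compactness/domain issue: Stone--Weierstrass gives uniform polynomial approximation only on a compact set, and the symmetrization produces points that are permuted images of the original argument, so one must ensure the approximation bound is valid at those permuted points too. I handle this by enlarging $\Omega$ to its group orbit $\widetilde\Omega$ at the outset, after which the comparison $p$ vs.\ $f$ is legitimate at every permuted argument. Everything else---invariance, polynomial structure, the final $\epsilon$ bound---is essentially a one-line consequence of the Reynolds construction.
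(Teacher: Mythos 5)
Your proof is correct and follows essentially the same route as the paper's: Stone--Weierstrass gives an ordinary polynomial approximant, which is then averaged over $S_{N_1}\times\cdots\times S_{N_K}$, and the partial invariance of $f$ lets you compare each permuted summand of $p$ against $f$ itself to preserve the $\epsilon$ bound. Your explicit enlargement of the compact domain to its group orbit $\widetilde\Omega$ is a point the paper's proof leaves implicit, and it is a legitimate refinement rather than a deviation.
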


\begin{proof}
By Stone-Weierstrass
Theorem~\cite{stone1937applications,stone1948generalized} for the compact
Hausdorff space, polynomial on compact Hausdorff space is dense in the space
of continuous functions on that same compact Hausdorff space. So for any
partially invariant function $f$ on $W$ and any $\epsilon>0$, there exists a
polynomial $f'$ on $W$ such that $|f'(\mathcal{X})-f(\mathcal{X})|<\epsilon$
for all $\mathcal{X}\in W$.

Let $\mathcal{X} = [\mX_1,\cdots,\mX_K] \in W$, and $|\mathcal{S}| =
|S_{N_1}|\cdots |S_{N_K}|$. Construct
\begin{equation} \label{eq.symf}
  \hat{f}(\mathcal{X}) = \symsum f'(T_{\pi_1}\mX_1,\cdots,T_{\pi_K}\mX_K),
\end{equation}
which is a partially invariant polynomial. We have
\small
\begin{align*}
   &|f'(\mathcal{X})-f(\mathcal{X})|\\
   &= \Big|\symsum f'(T_{\pi_1}\mX_1,\cdots,T_{\pi_K}\mX_K) - f(\mathcal{X})\Big|\\
    &= \Big|\symsum\big(f'(T_{\pi_1}\mX_1,{\tiny\cdots},T_{\pi_K}\mX_K) \tiny{-} f(T_{\pi_1}\mX_1,{\tiny\cdots},T_{\pi_K}\mX_K)\big)\Big|\\
    &\leq \symsum \left|f'(T_{\pi_1}\mX_1,{\tiny\cdots},T_{\pi_K}\mX_K) \tiny{-} f(T_{\pi_1}\mX_1,{\tiny\cdots},T_{\pi_K}\mX_K)\right| \\
    &< \epsilon
\end{align*}
\normalsize
The function in \eqref{eq.symf} thus fulfills the requirement of the lemma.
\end{proof}

The following lemma gives one form of explicit expansion for partially
invariant polynomials.
\begin{lemma} \label{le.sumofproduct}
Any partially invariant polynomial $g$ on $W = W_1 \times W_2
\cdots \times W_K$, where $W_k = \bbR^{M_k \times N_k}$, can be expressed in
the following form:
\begin{equation}\label{eq.sumofproduct}
g(\mathcal{X}) = \sum_{q=1}^Q
h_{1,q}(\mX_1)h_{2,q}(\mX_2)\cdots h_{K,q}(\mX_K).
\end{equation}
where $Q$ is an integer, and $h_{k,q}(\mX_k)$ are $S_{N_k}$-invariant.
\end{lemma}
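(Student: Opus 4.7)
The plan is to first expand a general polynomial into a sum of separable products, and then exploit the partial invariance of $g$ by averaging over the full product group $S_{N_1} \times \cdots \times S_{N_K}$; the product structure of the separable terms will cause the multi-index average to factor into a product of single-index averages, each of which is by construction $S_{N_k}$-invariant.

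First, I would observe that since the entries of $\mX_1, \ldots, \mX_K$ are algebraically independent sets of indeterminates, any polynomial $g$ on $W$ can be expanded as a finite sum of monomials, and each monomial factors uniquely as a product of a monomial in the entries of $\mX_1$, a monomial in the entries of $\mX_2$, and so on. Grouping these gives a (not necessarily invariant) decomposition
\begin{equation*}
g(\mathcal{X}) = \sum_{q=1}^{Q} p_{1,q}(\mX_1)\, p_{2,q}(\mX_2) \cdots p_{K,q}(\mX_K),
\end{equation*}
where each $p_{k,q}$ is a polynomial on $W_k$.

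Next, I would invoke the hypothesis that $g$ is partially invariant to rewrite $g$ as its own average over the product group:
\begin{equation*}
g(\mathcal{X}) = \symsum g(T_{\pi_1}\mX_1,\ldots,T_{\pi_K}\mX_K).
\end{equation*}
Substituting the separable expansion and pulling the finite sum over $q$ out front yields
\begin{equation*}
g(\mathcal{X}) = \sum_{q=1}^{Q} \symsum \prod_{k=1}^K p_{k,q}(T_{\pi_k}\mX_k).
\end{equation*}
For fixed $\mathcal{X}$ and fixed $q$, the $k$-th factor inside the product depends only on the index $\pi_k$, so the standard ``sum of a product of single-index factors factors into a product of sums'' identity applies. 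Defining
\begin{equation*}
h_{k,q}(\mX_k) \coloneqq \frac{1}{|S_{N_k}|} \sum_{\pi_k \in S_{N_k}} p_{k,q}(T_{\pi_k}\mX_k),
\end{equation*}
which is a polynomial and is $S_{N_k}$-invariant by construction (it is the Reynolds-type symmetrization of $p_{k,q}$), gives precisely the required form
$g(\mathcal{X}) = \sum_{q=1}^Q h_{1,q}(\mX_1) \cdots h_{K,q}(\mX_K)$.

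There is no serious obstacle in this argument; the only step that requires care is the factorization of the multi-index sum into a product of single-index sums, which is legitimate because the action of the product group splits across the $K$ coordinates and each factor $p_{k,q}(T_{\pi_k}\mX_k)$ depends on exactly one permutation index. The algebraic independence of the variable blocks used in the initial separable expansion is likewise routine but worth stating explicitly.
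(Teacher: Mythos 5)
Your proof is correct and follows essentially the same route as the paper's: expand $g$ into separable products $\sum_q \prod_k p_{k,q}(\mX_k)$, use partial invariance to replace $g$ by its average over $S_{N_1}\times\cdots\times S_{N_K}$, and factor the multi-index average into a product of the single-index Reynolds symmetrizations $h_{k,q}$. Your write-up is if anything slightly more explicit than the paper's on the two points it glosses over (why the separable expansion exists, and why the average of the product distributes into a product of averages), so there is nothing to fix.
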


\begin{proof}
Since $g$ is a polynomial, it is possible to write $g$ as
\begin{align*}
  g(\mathcal{X}) &= \sum_{q=1}^Q
    h'_{1,q}(\mX_1)h'_{1,q}(\mX_2)\cdots h'_{K,q}(\mX_K)
\end{align*}
where $Q$ is a suitable integer depending on the degree of $g$, and
${h'}_{k,q}$ are polynomials on $W_k$.

Define the symmetrized versions of $h'_{k,q}$ as follows:
\begin{equation}
  h_{k,q}(\mX_k) := \frac{1}{|S_{N_k}|} \sum_{\pi_k \in S_{N_k}}h'_{k,q}(T_{\pi_k}\mX_k)
\end{equation}
As $g$ is partially invariant, its value does not change if we perform
(partial) symmetrization on $g$. Therefore,
\small
\begin{align*}
&g(\mathcal{X}) =\symsum g(T_{\pi_1}\mX_1,\cdots,T_{\pi_K}\mX_K)\\
&=\symsum\sum_{q=1}^Q h'_{1,q}(T_{\pi_1}\mX_1)\cdots h'_{K,q}(T_{\pi_K}\mX_K) \\
&=\sum_{q=1}^Q h_{1,q}(\mX_1){\tiny\cdots} h_{K,q}(\mX_K)
\end{align*}
\normalsize
where the last step follows by exchanging the order of summations, and
distributing the symmetrization sums to their respective $h'_{k,q}$ functions.
\end{proof}

\begin{lemma} \label{le.hilbert}
\about{Hilbert's finiteness Theorem, e.g.,\cite{kraft2000classical}}
There exists finitely many invariant polynomials $f_1, \ldots,
f_{N_\text{inv}}: \setR^n \to \setR$ such that any invariant polynomial $f:
\setR^n \to \setR$ can be expressed as
\begin{equation}
f(\bx) = \tilde f (f_1(\bx), \ldots, f_{N_\text{inv}}(\bx))
\end{equation}
with some polynomial $\tilde f$ of $N_\text{inv}$ variables.
\end{lemma}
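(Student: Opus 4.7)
The approach is the classical one combining Hilbert's basis theorem with the Reynolds averaging operator available for finite groups. Let $G$ denote the finite group under consideration (in our setting, a product of symmetric groups acting by coordinate permutations on $\setR^n$), and let $\setR[\bx] = \setR[x_1,\ldots,x_n]$ denote the polynomial ring with invariant subring $\setR[\bx]^G$. Since $G$ permutes coordinates, both rings inherit a natural grading by total degree, so it suffices to treat homogeneous invariants.

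First I would form the \emph{Hilbert ideal} $I \subset \setR[\bx]$, defined as the ideal in the full polynomial ring generated by all homogeneous invariants of strictly positive degree. Hilbert's basis theorem guarantees that $\setR[\bx]$ is Noetherian, so $I$ admits a finite generating set. Since $I$ is spanned as a vector space by invariants, I may select homogeneous invariant generators $f_1,\ldots,f_{N_\text{inv}}$ of $I$, each of positive degree. These are the candidate algebra generators for $\setR[\bx]^G$.

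Next I would introduce the Reynolds operator $R: \setR[\bx] \to \setR[\bx]^G$ defined by $R(h)(\bx) = \frac{1}{|G|}\sum_{g\in G} h(g\cdot\bx)$. Finiteness of $G$ makes this a well-defined $\setR$-linear projection onto the invariant subring, preserving total degree. The crucial property I need is the module identity $R(p\cdot h) = p\cdot R(h)$ whenever $p$ is already invariant, which follows from the change of variables $g\cdot \bx \mapsto \bx$ inside the averaging sum.

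The main step is induction on total degree $d$ showing that every homogeneous invariant $f$ of degree $d$ lies in $\setR[f_1,\ldots,f_{N_\text{inv}}]$. The base case $d=0$ is trivial since constants are scalars. For $d \geq 1$, membership $f \in I$ lets me write $f = \sum_i h_i f_i$ with each $h_i$ homogeneous of degree $d - \deg f_i < d$ (not necessarily invariant). Applying $R$ to both sides and invoking the module identity with each invariant $f_i$ gives $f = R(f) = \sum_i R(h_i)\, f_i$, where now each $R(h_i)$ is a homogeneous \emph{invariant} of degree strictly less than $d$. The inductive hypothesis expresses each $R(h_i)$ as a polynomial in $f_1,\ldots,f_{N_\text{inv}}$, and hence so does $f$. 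An arbitrary (not necessarily homogeneous) invariant decomposes into homogeneous components, each individually invariant since $G$ preserves degree, so the conclusion extends to all of $\setR[\bx]^G$.

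The main obstacle is the interplay between the two ingredients: Hilbert's basis theorem only provides generation of $I$ \emph{as an ideal} (coefficients in $\setR[\bx]$, not $\setR[\bx]^G$), while the theorem demands generation \emph{as an algebra}. The Reynolds operator bridges this gap by converting the non-invariant coefficients $h_i$ into invariant ones $R(h_i)$ while strictly decreasing degree, enabling the induction to terminate. Both crucially require $|G|$ to be finite and invertible in $\setR$, which holds for the products of symmetric groups relevant to our partial-permutation setting.
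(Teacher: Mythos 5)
Your proposal is correct. Note that the paper does not prove this lemma at all: it is stated as a classical black-box result with a citation to the literature (Kraft--Procesi), and the only thing the paper actually uses downstream is the concrete instance for symmetric groups, where the generating set is made explicit by the power sums of Lemma~\ref{le.powersum}. What you have written is the standard Hilbert--Noether argument, and it is complete: the Hilbert ideal $I$ generated by positive-degree homogeneous invariants is finitely generated by Noetherianity, and since $I$ is generated by homogeneous invariants one may take the $f_i$ to be homogeneous invariants of positive degree; the Reynolds operator $R$ then converts an ideal-membership expression $f=\sum_i h_i f_i$ into one with invariant coefficients $R(h_i)$ of strictly smaller degree, and induction on degree closes the gap between generation as an ideal and generation as an algebra. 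You correctly identify this gap as the crux, and you correctly flag the hypotheses that make the Reynolds operator available ($|G|$ finite and invertible in the base field), which hold for the products of symmetric groups relevant here. The only thing your write-up buys beyond what the paper needs is generality: it establishes finiteness for an arbitrary finite group acting linearly, whereas the paper could have sidestepped the abstract lemma entirely by invoking the fundamental theorem of symmetric polynomials (or the power-sum generation it already states) for the specific $S_N$ actions it uses.
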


\begin{lemma} \label{le.powersum}
\about{Power sums as generating set} One generating set of symmetric
polynomials on $\setR^{N}$ is power sums up to degree $N$:
\begin{equation}\label{eq.gen_set}
    f_j(\bx) = \sum_{n=1}^{N} x_n^j\quad j=1,\cdots, N,\end{equation}
where $x_n$ is the $n$-th entry of $\bx$.
\end{lemma}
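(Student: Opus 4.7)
The plan is to reduce this classical fact to two well-known ingredients, composed in sequence: the fundamental theorem of symmetric polynomials, and Newton's identities.

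First I would invoke the fundamental theorem of symmetric polynomials: every $S_N$-invariant polynomial $f \in \setR[x_1,\dots,x_N]^{S_N}$ can be written as a polynomial in the elementary symmetric polynomials $e_1,\dots,e_N$, where $e_k(\bx) = \sum_{i_1<\cdots<i_k} x_{i_1}\cdots x_{i_k}$. The standard proof of this is by induction on the lexicographic order of the leading monomial of $f$: given the leading monomial $x_1^{\alpha_1}\cdots x_N^{\alpha_N}$ with $\alpha_1\ge\alpha_2\ge\cdots\ge\alpha_N$ (we may assume this by symmetry), subtracting an appropriate scalar multiple of $e_1^{\alpha_1-\alpha_2}e_2^{\alpha_2-\alpha_3}\cdots e_N^{\alpha_N}$ strictly decreases the leading monomial, and the induction terminates. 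This step reduces the problem to showing that each $e_k$ is itself expressible as a polynomial in the power sums $f_1,\dots,f_k$.

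Second I would establish this reduction via Newton's identities, which state that for $1\le k \le N$,
\begin{equation*}
k\,e_k \;=\; \sum_{i=1}^{k} (-1)^{i-1} e_{k-i}\, f_i.
\end{equation*}
These can be derived cleanly by taking a logarithmic derivative of the generating identity $\prod_{n=1}^N (1 - x_n t) = \sum_{k=0}^N (-1)^k e_k t^k$ and comparing coefficients of $t^{k-1}$ on both sides. Solving this recursion for $e_k$ in terms of $e_1,\dots,e_{k-1}$ and $f_1,\dots,f_k$, and then unrolling, yields $e_k = P_k(f_1,\dots,f_k)$ for some polynomial $P_k$ with rational coefficients. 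Composing with the expression $f = Q(e_1,\dots,e_N)$ from step one gives $f$ as a polynomial in $f_1,\dots,f_N$, which is exactly what the lemma requires.

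The only real subtlety I would flag is that the inversion in step two divides by the integer $k$, so the argument depends on working in characteristic zero; this is automatic here since the domain is $\setR^N$ and coefficients live in $\setR$. A secondary point worth checking, though not asked for by the lemma statement, is that truncating the generating set at degree $N$ (rather than continuing with $f_{N+1}, f_{N+2},\dots$) really is enough: this is immediate because $e_1,\dots,e_N$ are already sufficient by step one, and each of these is expressed using only $f_1,\dots,f_N$. No higher power sums are needed.
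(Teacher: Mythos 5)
Your proof is correct: the fundamental theorem of symmetric polynomials reduces the claim to expressing each elementary symmetric polynomial $e_k$ in the power sums, and Newton's identities $k\,e_k=\sum_{i=1}^{k}(-1)^{i-1}e_{k-i}f_i$ accomplish this over a field of characteristic zero, with only $f_1,\dots,f_N$ ever needed. The paper itself offers no proof of this lemma, treating it as a classical fact, so your argument simply supplies the standard derivation the paper takes for granted; your remark about division by $k$ being harmless over $\setR$ is the right subtlety to flag.
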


\subsubsection{Proof of \thref{theorem6}}

\begin{proof}
By \leref{le.dense}, any partially invariant function can be approximated by a
partially invariant polynomial, which in turn can be written in the form of
\eqref{eq.sumofproduct}, due to \leref{le.sumofproduct}. Using
\leref{le.polar}, each term (fully) invariant polynomial $h_{k,q}(\bX_k)$ can
be expressed as follows,
\begin{equation}
  h_{k,q}(\bX_k) = \sum_{t=1}^{T_{k,q}}f_{k,q,t}(\ba_t^\top \bX_k),
\end{equation}
where $f_{k,q,t}$ is an invariant polynomial.

Based on Hilbert's finiteness Theorem \leref{le.hilbert}, and the power-sum
basis result \leref{le.powersum}, each function $f_{t,k,q}(\ba_{k,q,t}^\top
\bX_k)$ is expressible as a polynomial of the following $N_k \cdot T_{k,q}$
variables:
\begin{equation} \label{eq.akqt}
  \sum_{n=1}^{N_k} (\ba_{k,q,t}^\top \bx_{k,n})^j, \quad
    t=1,\ldots, T_{k,q}, \quad j=1, \ldots, N_k.
\end{equation}

Let $\bA_{k,q}$ denote the $M_k\times T_{k,q}$ matrix whose $t$'s column is
$\ba_{k,q,t}$, $t=1, \ldots, T_{k,q}$. Define the power-sum vector function
\begin{equation}
  \bp^{(N)} (x) := [x^1, x^2, \ldots, x^N],
\end{equation}
and the function $\bq_{k,q}: \setR^{M_k}\to \setR^{N_kT_{k,q}}$
\begin{equation}
  \bg_{k,q}(\bx; \bA_{k,q}) :=[\bp^{(N_k)}(\ba_{k,q,1}^\top \bx),\; \ldots\;
    \bp^{(N_k)}(\ba_{k,q,T_{k,q}}^\top \bx)].
\end{equation}
It then follows that $f_{k,q,t}(\ba_{k,q,t}^\top \bX_k)$ is a polynomial of
\begin{equation}
  \sum_{n=1}^{N_k} \bg_{k,q} (\bx_{k,n}; \bA_{k,q}).
\end{equation}
Let $\bA_k:=[\bA_{k, 1}, \ldots, \bA_{k,Q}]$, and
\(
  \bg_k(\bx; \bA_k) := [\bg_{k,q}(\bx; \bA_{k,1}), \ldots \bg_{k,q}(\bx;
  \bA_{k,Q}]
\).
Recalling \leref{le.sumofproduct}, we establish that the function $g$ can be
approximated arbitrarily well by a function of the form
\begin{equation} \label{eq.hfunc}
 h\left(
  \sum_{n=1}^{N_1} \bg_1 (\bx_{1,n}; \bA_1),\; \ldots,\;
  \sum_{n=1}^{N_K} \bg_K (\bx_{K,n}; \bA_K)
  \right)
\end{equation}
where $h$ is a polynomial. The vector version of the result follows from the
scalar version. In the statement of the theorem, we have removed the explicit
parameters $\bA_k$'s.
\end{proof}

\subsection{Partially Permutation Invariant Neural Network: Proof of
\thref{theorem:main}}

The main idea of the proof is the following: as neural network is an universal
approximator, we use neural networks with one hidden layer to approximate $h$
in \eqref{eq.hfunc} and the power functions. We then get an approximator of
partially permutation invariant in the form of a structured neural network.

\begin{proof}
We see that \eqref{eq.hfunc} can approximate any partially permutation
invariant polynomial on $W$. By the universal approximation theorem of neural
networks \cite{cybenko1989approximation}, we can approximate the polynomial
$h$ with a shallow (one-hidden layer) neural network. Any partially
permutation invariant function can be approximated as
\begin{equation}\label{eq.multi_nn}
    \sum_{l=1}^L c_l
    \sigma\left(
      \sum_{k=1}^K\sum_{t=1}^{T_k}\sum_{j=1}^{N_k}
        w_{l,t j} \sum_{n=1}^{N_k} p_j (\ba_{k,t}^\top \vx_{k,n}) + b_l\right)
\end{equation}
where $p_j(x) = x^j$, and we have combined the double indices $q$ and $t$ of
$\ba_{k,q,t}$ in \eqref{eq.akqt} into a single index $t$ for a fixed $k$.

\noindent It is clear that the function in \eqref{eq.multi_nn} is partially
permutation invariant as all the $\{\vx_{k,n}\}_{n=1}^{N_k}$ are treated the
same. We can approximate $p_j(y)$ by $\sum_{l'=1}^{L'} d_{l'} \sigma(u_{j,l'}
y+v_{j,l'})$. Combining the two neural network approximators, it follows that
the functions $h(\cdot)$ and $\vg_k$ in Theorem~\ref{theorem6} can be chosen
in the following form:
\begin{align*}
&\rvh\left([\bz_1^\top,\cdots, \bz_K^\top]^\top=: \bar{\bz}~|~\vc, \mW\right) =  \vc^\top {\bm\sigma}(\mW\bar{\bz})
\\
&\rvg_k\left(\bx~|~T, \{\vu_t, \va_t\}_{t=1}^{T}\right) =
[
{\bm\sigma}((\vu_1 \otimes \va_{1}) \bx  + \vv)^\top, \cdots, \\ & \qquad \qquad \qquad \qquad \qquad \qquad \quad{\bm\sigma}((\vu_{T}\otimes \va_{T}) \bx+ \vv)^\top
]^\top,
\end{align*}
where we have omitted the index $k$ on $\ba_{k,t}^\top$ and used $\va$ to
denote $\ba^\top$ for simplicity.
\end{proof}

}

\section{Additional Experiment Setups}
\subsection{Configurations of Hardware and Software.} {\OM} is implemented with \texttt{PyTorch} and \texttt{TensorFlow} learning framework in the version 1.1.0 and 1.8.0 (Python 3 version). All experiments are conducted on a Linux 18.04 machine. The machine has one Core i7-6700K, 64 GB RAM, 512GB+2T Hard disks and two GTX 1080 graphics cards.

\subsection{Train-Test Splits.} For all homogeneous datasets, we conduct the same training and test splits for 5 times. In each round, we randomly sample a ratio of nodes from 10\% to 90\% to be the training set. We leave all the other nodes in the test set to evaluate the classification performance among all the methods. For the heterogeneous case, we only split the author nodes into training and test set since only author takes labels in those two datasets. We follow the same split strategy of homogeneous cases on the heterogeneous graphs as well.

\end{document}